\newtheorem{theorem}{Theorem}[section]
\newtheorem{corollary}{Corollary}[theorem]
\newtheorem{lemma}[theorem]{Lemma}
\title{Nostalgic Adam: Weighting more of the past gradients when designing the adaptive learning rate}
\author{
  Haiwen Huang\\
  School of Mathematical Sciences\\
  Peking University, Beijing, 100871 \\
  \texttt{smshhw@pku.edu.cn} \\
   \And
   Chang Wang \\
  School of Mathematical Sciences\\
  Peking University, Beijing, 100871 \\
  \texttt{1500010660@pku.edu.cn} \\
   \AND
   Bin Dong \\
Beijing International Center for Mathematical Research, Peking University\\
Center for Data Science, Peking University\\
Beijing Institute of Big Data Research\\
Beijing, China\\
\texttt{dongbin@math.pku.edu.cn}}
\begin{document}

\maketitle

\begin{abstract}
First-order optimization algorithms have been proven prominent in deep learning. In particular, algorithms such as RMSProp and Adam are extremely popular. However, recent works have pointed out the lack of ``long-term memory" in Adam-like algorithms, which could hamper their performance and lead to divergence. In our study, we observe that there are benefits of weighting more of the past gradients when designing the adaptive learning rate. We therefore propose an algorithm called the Nostalgic Adam (NosAdam) with theoretically guaranteed convergence at the best known convergence rate. NosAdam can be regarded as a fix to the non-convergence issue of Adam in alternative to the recent work of \cite{j.2018on}. Our preliminary numerical experiments show that NosAdam is a promising alternative algorithm to Adam. The proofs, code and other supplementary materials can be found in an anonymously shared link\footnote{Supplementary Material:
drive.google.com/open?id=107g3qI2L
k2BGTkkXqpUTwlUfD50aj2jI}.
\end{abstract}

\section{Introduction}

Along with the rise of deep learning, various first-order stochastic optimization methods emerged. Among them, the most fundamental one is the stochastic gradient descent, and the Nesterov's Accelerated Gradient method~\cite{20001173129} is also a well-known acceleration algorithm. Recently, many adaptive stochastic optimization methods have been proposed, such as AdaGrad~\cite{Duchi:EECS-2010-24}, RMSProp~\cite{Tieleman2012}, AdaDelta~\cite{DBLP:journals/corr/abs-1212-5701} and Adam~\cite{DBLP:journals/corr/KingmaB14}. These algorithms can be written in the following general form:
\begin{equation}\label{update}
  x_{t+1} = x_t -  \frac{\alpha_t}{ \psi(g_1, ..., g_t)}\phi(g_1, ..., g_t),
\end{equation}
where $g_i$ is the gradient obtained in the $i$-th time step, $ \alpha_t/ \psi(g_1, ..., g_t)$ the adaptive learning rate, and $\phi(g_1, ..., g_t)$ the gradient estimation. There have been extensive studies on the design of gradient estimations which can be traced back to classical momentum methods ~\cite{polyak} and NAG~\cite{20001173129}. In this paper, however, we focus more on how to understand and improve the adaptive learning rate.

Adam \cite{DBLP:journals/corr/KingmaB14} is perhaps the most widely used adaptive stochastic optimization method which uses an exponential moving average (EMA) to estimate the square of the gradient scale, so that the learning rate can be adjusted adaptively. More specifically, Adam takes the form of \eqref{update} with
\begin{gather}
\psi(g_1, ..., g_t) = \sqrt{V_t}, V_t = diag(v_t)\nonumber\\
v_t = \beta_2 v_{t-1} + (1-\beta_2) g_t^2.\label{vt}
\end{gather}
We shall call $v_t$ the re-scaling term of the Adam and its variants, since it serves as a coordinate-wise re-scaling of the gradients. Despite its fast convergence and easiness in implementation, Adam is also known for its non-convergence and poor generalization in some cases \cite{j.2018on}\cite{NIPS2017_7003}. More recently, \cite{DBLP:conf/icml/BallesH18} both theoretically and empirically pointed out that generalization is mainly determined by the sign effect rather than the adaptive learning rate, and the sign effect is problem-dependent. In this paper, we are mainly dealing with the non-convergence issue and will only empirically compare generalization ability among different Adam variants.

As for the non-convergence issue, \cite{j.2018on} suggested that the EMA of $v_t$ of Adam is the cause. The main problem lies in the following quantity:
$$\Gamma_{t+1} = \frac{\sqrt{V_{t+1}}}{\alpha_{t+1}} - \frac{\sqrt{V_{t}}}{\alpha_{t}},$$
which essentially measures the change in the inverse of learning rate with respect to time. Algorithms that use EMA to estimate the scale of the gradients cannot guarantee the positive semi-definiteness of $\Gamma_t$, and that causes the non-convergence of Adam. To fix this issue, \cite{j.2018on} proposed \emph{AMSGrad}, which added one more step $\widehat v_t=\max\{\widehat v_{t-1},v_t\}$ in \eqref{vt}. AMSGrad is claimed by its authors to have a ``long-term memory" of past gradients.

Another explanation on the cause of non-convergence was recently proposed by \cite{DBLP:journals/corr/abs-1810-00143}. The authors observed that Adam may diverge because a small gradient may have a large step size which leads to a large update. Therefore, if the small $g_t$ with large step size is often in the wrong direction, it could lead to divergence. Thus, they proposed a modification to Adam called \emph{AdaShift} by replacing $g_t^2$ with $g_{t-n}^2$ for some manually chosen $n$ when calculating $v_t$. 


Both AdaShift and AMSGrad suggest that we should not fully trust the gradient information we acquire at current step, and the past gradients are useful when $g_t$ is not reliable.  In this paper, we take this idea one step further by suggesting that we may weight \emph{more} of the past gradients than the present ones. We call our algorithm Nostalgic Adam~(NosAdam). We will show that the design of the algorithm is inspired by our mathematical analysis on the convergence, and NosAdam has the fastest known convergence rate. Furthermore, we will discuss
why ``nostalgia'' is important, and empirically investigate how different designs of $v_t$ can lead to different performances from a loss landscape perspective. Finally, we examine the empirical performance of NosAdam on some common machine learning tasks. The experiments show us that NosAdam is a promising alternative to Adam and its variants.



\section{Related Work}

Adam is widely used in both academia and industry. However, it is also one of the least well-understood algorithms. In recent years, some remarkable works provided us with better understanding of the algorithm, and proposed different variants of it. Most of works focused on how to interpret or modify the re-scaling term $v_t$ of \eqref{vt}.

As mentioned above, \cite{j.2018on}, \cite{DBLP:journals/corr/abs-1810-00143} focused on the non-convergence issue of Adam, and proposed their own modified algorithms. \cite{NIPS2017_7003} pointed out the generalization issue of adaptive optimization algorithms.
Based on the assumption that $v_t$ is the estimate of the second moment estimate of $g_t$, \cite{DBLP:conf/icml/BallesH18} dissected Adam into sign-based direction and variance adaption magnitude. They also pointed out that the sign-based direction part is the decisive factor of generalization performance, and that is problem-dependent. This in a way addressed the generalization issue raised in \cite{NIPS2017_7003}.

However, the interpretation of $v_t$ as an estimate of the second moment assumption may not be correct, since \cite{chen2019padam} showed that $v_t^{1/2}$ in the Adam update \eqref{vt} can be replaced by $v_t^p$ for any $p \in (0, \frac{1}{2}]$. The modified algorithm is called Padam. In our supplementary material, we also proved that a convergence theorem of a ``p-norm'' form of NosAdam, where the re-scaling term $v_t$ can be essentially viewed as a ``p-moment'' of $g_t$. These discoveries cast doubts on the the second moment assumption, since both the convergence analysis and empirical performance seemed not so dependent on this assumption.

The true role of $v_t$, however, remains a mystery. In AdaGrad \cite{Duchi:EECS-2010-24}, which is a special case of NosAdam, the authors mentioned an metaphor that ``the adaptation allows us to find needles in haystacks in the form of very predictive but rarely seen features.'' They were suggesting that $v_t$ is to some extent balancing the update speeds of different features according to their abundance in the data set. This understanding might be supported by a previous work called SWATS (Switching from Adam to SGD) \cite{DBLP:journals/corr/abs-1712-07628}, which uses Adam for earlier epochs and then fix the re-scaling term $v_t$ for later epochs. This suggests that there may be some sort of “optimal” re-scaling term, and we can keep using it after we obtain a good enough estimate.

Despite all the previous efforts, our understanding of the re-scaling term $v_t$ is still very limited. In this paper, we investigate the issue from a loss landscape approach, and this provides us with some deeper understanding of when and how different Adam-like algorithms can perform well or poorly.

\section{Nostalgic Adam Algorithm}

In this section, we introduce the Nostalgic Adam (NosAdam) algorithm, followed by a discussion on its convergence. Let us first consider a  general situation where we allow the parameter $\beta_2$ in Equation~\eqref{vt} change in time $t$. Without loss of generality, we may let $\beta_{2,t} = \frac{B_{t-1}}{B_t}$. Then, the NosAdam algorithm reads as in Algorithm \ref{nosadam}. Like Adam and its variants, the condition $\Gamma_t \succeq 0$ is crucial in ensuring convergence. We will also see that to ensure positive semi-definiteness of $\Gamma_t$, the algorithm naturally requires to weight more of the past gradients than the more recent ones when calculating $v_t$. To see this, we first present the following lemma.

\begin{lemma}
The positive semi-definiteness of $\frac{V_t}{\alpha^2_t}-\frac{V_{t-1}}{\alpha^2_{t-1}}$ is tightly satisfied if $\frac{B_t}{t}$ is non-increasing.
\end{lemma}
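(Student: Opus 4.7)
The plan is to unroll the defining recursion for $v_t$ into an explicit weighted sum of past squared gradients, then subtract the corresponding expressions for consecutive times and read off the coefficient of each $g_i^2$, after which the monotonicity condition drops out immediately.

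First I would observe that $\beta_{2,t} = B_{t-1}/B_t$ gives $1-\beta_{2,t} = b_t/B_t$ with $b_t := B_t - B_{t-1}$, so the update $v_t = \beta_{2,t} v_{t-1} + (1-\beta_{2,t})\,g_t^2$ rewrites as $B_t v_t = B_{t-1} v_{t-1} + b_t g_t^2$. Telescoping (with $v_0 = 0$, $B_0 = 0$) yields the clean closed form
\begin{equation*}
v_t \;=\; \frac{1}{B_t}\sum_{i=1}^{t} b_i\, g_i^2.
\end{equation*}
This is the only real computation in the proof and is the step I would most want to get right, since everything downstream is just coefficient matching on this representation.

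Next, using the standard schedule $\alpha_t = \alpha/\sqrt{t}$ (so $1/\alpha_t^2 = t/\alpha^2$) and working coordinate-wise via $V_t = \mathrm{diag}(v_t)$, I would compute
\begin{equation*}
\frac{V_t}{\alpha_t^2} - \frac{V_{t-1}}{\alpha_{t-1}^2}
\;=\; \frac{1}{\alpha^2}\Bigl(t\,v_t - (t-1)\,v_{t-1}\Bigr)
\;=\; \frac{1}{\alpha^2}\left[\sum_{i=1}^{t-1}\!\Bigl(\tfrac{t}{B_t} - \tfrac{t-1}{B_{t-1}}\Bigr) b_i\, g_i^2 \;+\; \tfrac{t\,b_t}{B_t}\, g_t^2\right].
\end{equation*}
Since each $b_i g_i^2$ is a nonnegative diagonal matrix and the last term is always $\succeq 0$, positive semi-definiteness for every possible gradient history is equivalent to the scalar condition $\tfrac{t}{B_t} - \tfrac{t-1}{B_{t-1}} \ge 0$, which rearranges to $\tfrac{B_t}{t} \le \tfrac{B_{t-1}}{t-1}$, i.e.\ $B_t/t$ non-increasing. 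The word ``tightly'' in the statement corresponds precisely to the ``only if'' direction: if the inequality ever fails, one can exhibit an adversarial gradient sequence (concentrating mass on a single index $i<t$) that makes the relevant coordinate of $V_t/\alpha_t^2 - V_{t-1}/\alpha_{t-1}^2$ strictly negative.

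The main (mild) obstacle is not the algebra but making the quantifier structure transparent: the condition must hold uniformly over all admissible gradient sequences, and one needs to make sure the learning-rate schedule $\alpha_t = \alpha/\sqrt{t}$ is the intended one (consistent with Algorithm~\ref{nosadam}) so that the factor $t$ rather than some other power of $t$ appears in the coefficient. Once that is pinned down, the proof reduces to the two-line coefficient identification above.
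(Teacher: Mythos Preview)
Your proposal is correct and follows essentially the same route as the paper: both unroll the recursion to the closed form $v_t = \tfrac{1}{B_t}\sum_{j} b_j g_j^2$, insert $\alpha_t = \alpha/\sqrt t$, and read off that $\tfrac{t}{B_t}\ge\tfrac{t-1}{B_{t-1}}$ is exactly what is needed. Your treatment is slightly more explicit in isolating the coefficient of each $g_i^2$ and in spelling out the ``tightly'' (only-if) direction, whereas the paper simply chains the inequality $\tfrac{t}{B_t}\sum_{j\le t} b_j g_j^2 \ge \tfrac{t-1}{B_{t-1}}\sum_{j\le t-1} b_j g_j^2$ and relegates tightness to a remark afterward.
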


\begin{proof}
  \begin{align*}
  \frac{V_t}{\alpha_t^2}
  =& \frac{t}{\alpha^2}\sum_{j=1}^t \Pi_{k=1}^{t-j} \beta_{2,t-k+1}(1-\beta_{2,j})g_j^2\\
  =& \frac{t}{\alpha^2} \sum_{j=1}^t \frac{B_{t-1}}{B_t} \ldots \frac{B_{j}}{B_{j+1}}\frac{B_{j}-B_{j-1}}{B_{j}} g_j^2 \\
  =& \frac{t}{B_t \alpha^2} \sum_{j=1}^t b_j g_j^2
  \geq \frac{t-1}{B_{t-1} \alpha^2} \sum_{j=1}^{t-1} b_j g_j^2\\
  =&  \frac{V_{t-1}}{\alpha_{t-1}^2}
  \end{align*}

\end{proof}

Here the ``tightly satified'' in the lemma means the conclusion cannot be strengthened, in that if $\frac{B_t}{t}$ is increasing, then $\frac{V_t}{\alpha^2_t}-\frac{V_{t-1}}{\alpha^2_{t-1}}$ will be very easily violated since $g_j$ can be infinitesimal.

Again, without loss of generality, we can write $B_t$ as $\sum_{j=1}^t b_j$. Then, it is not hard to see that $\frac{B_t}{t}$ is non-increasing if and only if $b_j$ is non-increasing. Noting that $v_t = \sum_{k=1}^t g_k^2 \frac{b_k}{B_t}$, we can see that the sufficient condition for positive semi-definiteness of $\Gamma_t$ is that \emph{in the weighted average $v_t$, the weights of gradients should be non-increasing w.r.t. $t$}. In other words, we should weight more of the past gradients than the more recent ones.

\begin{algorithm}[t]
\caption{Nostalgic Adam Algorithm}
\label{nosadam}
\textbf{Input}: $x \in F$, $m_0 = 0$, $V_0 = 0$
\begin{algorithmic}[1]
\FOR{$t = 1 $ \textbf{to} $T$}
    \STATE $g_t = \nabla f_t(x_t)$
    \STATE $\beta_{2,t} = B_{t-1}/B_t$, where $B_t = \sum_{k=1}^t b_k$ for $t \geq 1$, $b_k\ge0$ and $B_0 = 0$
    \STATE $m_t = \beta_1 m_{t-1} + (1-\beta_1)g_t$
    \STATE $V_t = \beta_{2,t}V_{t-1} + (1-\beta_{2,t})g_t^2$
    \STATE $\hat{x}_{t+1} = x_t - \alpha_t m_t / \sqrt{V_t}$
    \STATE $x_{t+1} = \mathcal{P}_{\mathcal{F}, \sqrt{V_t}}(\hat{x}_{t+1})$
\ENDFOR
\end{algorithmic}
\end{algorithm}

From Algorithm \ref{nosadam}, we can see that $v_t$ can either decrease or increase based on the relationship between $v_{t-1}$ and $g_t^2$, which is the reason why NosAdam circumvents the flaw of AMSGrad (Figure \ref{amsgrad_trap}). Convergence of NosAdam is also guaranteed as stated by the following theorem.

\begin{theorem}[Convergence of NosAdam]\label{thm:nosadam}
Let $B_t$ and $b_k$ be the sequences defined in Algorithm \ref{nosadam}, $\alpha_t = \alpha/\sqrt{t}$, $\beta_{1,1} = \beta_1, \beta_{1,t} \leq \beta_1$ for all t. Assume that $\mathscr{F}$ has bounded diameter $D_{\infty}$ and $||\nabla f_t(x)||_{\infty} \leq G_{\infty}$ for all t and $x \in \mathscr{F}$. Furthermore, let $\beta_{2,t}$ be such that the following conditions are satisfied:
\begin{align*}\label{constraints}
  1.& \frac{B_t}{t} \leq \frac{B_{t-1}}{t-1}\\
  2.& \frac{B_t}{t b_t^2} \geq \frac{B_{t-1}}{(t-1) b_{t-1}^2}
\end{align*}

Then for $\{x_t\}$ generated using NosAdam, we have the following bound on the regret
\begin{gather*}
R_T \leq \frac{D_{\infty}^2}{2\alpha(1-\beta_1)} \sum_{i=1}^d \sqrt{T}v_{T,i}^{\frac{1}{2}} +
     \frac{D_{\infty}^2}{2(1-\beta_1)}\sum_{t=1}^T\sum_{i=1}^d \frac{\beta_{1,t}v_{t,i}^{\frac{1}{2}}}{\alpha_t} \\
     + \frac{\alpha \beta_1}{(1-\beta_1)^3} \sum_{i=1}^d\sqrt{\frac{B_T}{T}\frac{\sum_{t=1}^T b_t g_{t,i}^2 }{b_T^2}}
\end{gather*}
\end{theorem}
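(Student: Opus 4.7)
The plan is to adapt the now-standard online-convex-optimization regret template used for Adam and AMSGrad to the time-varying weighting scheme $\beta_{2,t}=B_{t-1}/B_t$ of NosAdam. The argument is driven by a potential of the form $\|x_t-x^\star\|_{V_t^{1/2}}^2/\alpha_t$, whose increments can be telescoped precisely because condition~(1), via the lemma preceding the theorem, ensures $V_t^{1/2}/\alpha_t\succeq V_{t-1}^{1/2}/\alpha_{t-1}$ in the Loewner order.

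First I would invoke convexity of each $f_t$ to write $R_T\leq\sum_t\langle g_t,x_t-x^\star\rangle$. Then, using non-expansiveness of $\mathcal{P}_{\mathcal{F},\sqrt{V_t}}$ in the $V_t^{1/2}$-norm together with $\hat{x}_{t+1}=x_t-\alpha_t V_t^{-1/2}m_t$, I would expand $\|x_{t+1}-x^\star\|_{V_t^{1/2}}^2\leq\|\hat{x}_{t+1}-x^\star\|_{V_t^{1/2}}^2$ to obtain
\begin{align*}
\langle m_t,x_t-x^\star\rangle\leq\tfrac{1}{2\alpha_t}\bigl(\|x_t-x^\star\|_{V_t^{1/2}}^2-\|x_{t+1}-x^\star\|_{V_t^{1/2}}^2\bigr)+\tfrac{\alpha_t}{2}\|m_t\|_{V_t^{-1/2}}^2.
\end{align*}
To convert the left-hand side into $\langle g_t,x_t-x^\star\rangle$ I would substitute $m_t=\beta_{1,t}m_{t-1}+(1-\beta_{1,t})g_t$ and use a Young-type inequality on the resulting $\beta_{1,t}\langle m_{t-1},x_t-x^\star\rangle$ cross-term, splitting it into a piece proportional to $\|m_{t-1}\|_{V_{t-1}^{-1/2}}^2$ (which merges with the next step's residual) and a piece proportional to $\|x_t-x^\star\|_{V_{t-1}^{1/2}}^2/\alpha_{t-1}$ (bounded using the diameter $D_\infty$).

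Summing over $t=1,\dots,T$, condition~(1) legitimizes the telescoping of the Mahalanobis distances and produces the first regret term $\tfrac{D_\infty^2}{2\alpha(1-\beta_1)}\sum_i\sqrt{T}\,v_{T,i}^{1/2}$, while the Young-inequality slack collects into the second regret term $\tfrac{D_\infty^2}{2(1-\beta_1)}\sum_t\sum_i\beta_{1,t}v_{t,i}^{1/2}/\alpha_t$. The hard step is bounding $\sum_{t=1}^T\alpha_t\|m_t\|_{V_t^{-1/2}}^2$ by the third regret term. Cauchy--Schwarz on the EMA identity $m_{t,i}=(1-\beta_1)\sum_{k\leq t}\beta_1^{t-k}g_{k,i}$ gives $m_{t,i}^2\leq(1-\beta_1)\sum_{k\leq t}\beta_1^{t-k}g_{k,i}^2$; swapping the order of summation reduces the task to controlling $\sum_t\alpha_t g_{t,i}^2/\sqrt{v_{t,i}}$. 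At this point condition~(2) is critical: retaining only the $t$-th summand in $v_{t,i}=\tfrac{1}{B_t}\sum_{k\leq t}b_kg_{k,i}^2$ gives $\sqrt{v_{t,i}}\geq|g_{t,i}|\sqrt{b_t/B_t}$, so a second Cauchy--Schwarz applied to the factorization $|g_{t,i}|\sqrt{b_t}\cdot\sqrt{B_t/(tb_t^2)}$ separates the ``gradient mass'' $\sum_t b_t g_{t,i}^2$ from the ``weighting control'' $\sum_t B_t/(tb_t^2)$, and the monotonicity supplied by condition~(2) collapses the latter via $\sum_{t=1}^T B_t/(tb_t^2)\leq T\cdot B_T/(Tb_T^2)=B_T/b_T^2$. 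Assembling these pieces together with the $\beta_1$ prefactors accumulated from unrolling the momentum produces the third term of the stated bound.

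The main obstacle is precisely this last reduction: simultaneously reconciling the projection-contraction residual with the nested $\beta_1$-weighted double sum over $(t,k)$ so that condition~(2) can be invoked cleanly, without which the sum $\sum_t B_t/(tb_t^2)$ would blow up and destroy the rate. The rest of the argument is routine bookkeeping around the first-moment recursion and the diameter bound.
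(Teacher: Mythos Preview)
Your framework through the first two regret terms is essentially the paper's: convexity, projection non-expansiveness in the $V_t^{1/2}$-norm, a Young split of the $\beta_{1,t}\langle m_{t-1},x_t-x^\star\rangle$ cross-term, and telescoping of the Mahalanobis distances via condition~(1). That part is fine.

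The genuine gap is in your treatment of the third term. After the reduction you describe, the quantity to control (per coordinate) is essentially
\[
\sum_{t=1}^{T}\alpha_t\,\frac{g_{t,i}^2}{\sqrt{v_{t,i}}}
\;=\;\alpha\sum_{t=1}^{T}\Bigl(\frac{B_t}{t}\Bigr)^{1/2}\frac{g_{t,i}^2}{\sqrt{S_t}},
\qquad S_t:=\sum_{k\le t} b_k g_{k,i}^2.
\]
Your plan is to lower-bound $\sqrt{v_{t,i}}$ by keeping only the $t$-th summand, i.e.\ $\sqrt{S_t}\ge |g_{t,i}|\sqrt{b_t}$, and then apply Cauchy--Schwarz to $\sum_t |g_{t,i}|\sqrt{b_t}\cdot\sqrt{B_t/(tb_t^2)}$, bounding the second factor by $\sum_t B_t/(tb_t^2)\le T\cdot B_T/(Tb_T^2)=B_T/b_T^2$. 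This yields
\[
\alpha\sqrt{\frac{B_T}{b_T^2}\sum_{t=1}^T b_t g_{t,i}^2},
\]
which is $\sqrt{T}$ larger than the stated term $\sqrt{\tfrac{B_T}{T}\cdot\tfrac{\sum_t b_t g_{t,i}^2}{b_T^2}}$. For the hyperharmonic choice $b_k=k^{-\gamma}$ this turns an $O(\sqrt{T})$ contribution into $O(T)$, so the regret bound becomes vacuous. The loss occurs because discarding all but the last summand of $S_t$ destroys the accumulation that makes the adaptive denominator useful; condition~(2) alone cannot recover it through the crude $\sum_t\le T\cdot\max_t$ step.

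What the paper does instead is keep the full $S_t$ in the denominator and exploit concavity of $\sqrt{\,\cdot\,}$ via
\[
\frac{1}{\sqrt{S_j}}\;\le\;2\,\frac{\sqrt{S_j}-\sqrt{S_{j-1}}}{S_j-S_{j-1}},
\]
so that, since $S_j-S_{j-1}=b_j g_{j,i}^2$,
\[
\sum_{j=1}^{T}\Bigl(\frac{B_j}{j}\Bigr)^{1/2}\frac{g_{j,i}^2}{\sqrt{S_j}}
\;\le\;2\sum_{j=1}^{T}\bigl(\sqrt{S_j}-\sqrt{S_{j-1}}\bigr)\Bigl(\frac{B_j}{j b_j^2}\Bigr)^{1/2}.
\]
An Abel summation now puts all the mass on $\sqrt{S_T}\,(B_T/(Tb_T^2))^{1/2}$, and condition~(2) (monotonicity of $B_j/(jb_j^2)$) makes the remaining boundary terms nonpositive. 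This is where condition~(2) is actually used cleanly---not to cap a sum by $T$ times its maximum, but to sign the Abel remainder. Replacing your ``drop-and-Cauchy--Schwarz'' step with this telescoping/Abel argument is the missing idea; the rest of your outline then goes through.
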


Here, we have adopted the notation of online optimization introduced in \cite{Zinkevich:2003:OCP:3041838.3041955}. At each time step $t$, the optimization algorithm picks a point $x_t$ in its feasible set $\mathcal{F} \in \mathbb{R}^d$. Let $f_t$ be the loss function corresponding to the underlying mini-batch, and the algorithm incurs loss $f_t(x_t)$. We evaluate our algorithm using the regret that is defined as the sum of all the previous differences between the online prediction $f_t(x_t)$ and loss incurred by the fixed parameter point in $\mathcal{F}$ for all the previous steps, i.e.
\begin{equation}\label{regret}
  R_T = \sum_{t=1}^T f_t(x_t) - \min_{x\in \mathcal{F}} \sum_{t=1}^T f_t(x).
\end{equation}
Denote $\mathcal{S}_{d}^{+}$ the set of all positive definite $d \times d$ matrices. The projection operator $\mathcal{P}_{\mathcal{F}, A(y)}$ for $A \in \mathcal{S}_d^+$ is defined as $\mathrm{argmin}_{x \in \mathcal{F}} ||A^{1/2}(x-y)||$ for $ y \in \mathbb{R}^d$. Finally, we say $\mathcal{F}$ has bounded diameter $D_{\infty}$ if $||x-y||_{\infty} \leq D_{\infty}$ for all $x, y \in \mathcal{F}$.

One notable characteristic of NosAdam, which makes it rather different from the analysis by \cite{j.2018on}, is that the conditions on $B_t$ and $b_t$ are data-independent and are very easy to check. In particular, if we choose $B_t$ as a hyperharmonic series, i.e. $B_t = \sum_{k=1}^t k^{-\gamma}$, then the convergence criteria are automatically satisfied. We shall call this special case NosAdam-HH, and its convergence result is summarized in the following corollary.

\begin{corollary}\label{corollary}
Suppose$\beta_{1,t} = \beta_1 \lambda^{t-1}$,  $b_k = k^{-\gamma}, \gamma \geq0$ , thus $B_t = \sum_{k=1}^{t} k^{-\gamma}$, and $\beta_{2,t} = B_{t-1}/B_t < 1$ in Algorithm \ref{nosadam}. Then $B_t$ and $b_t$ satisfy the constraints in Therorem \ref{thm:nosadam}, and we have
\begin{gather*}
R_T \leq \frac{D_{\infty}^2}{2\alpha(1-\beta_1)} \sum_{i=1}^d \sqrt{T}v_{T,i}^{\frac{1}{2}} +
        \frac{D_{\infty}^2G_{\infty}\beta_1}{2(1-\beta_1)}\frac{1}{(1-\lambda)^2}\cdot d \\
     + \frac{2\alpha \beta_1}{(1-\beta_1)^3} G_{\infty} \sqrt{T}
\end{gather*}
\end{corollary}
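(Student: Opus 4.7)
The plan is to verify that the choice $b_k = k^{-\gamma}$, $B_t = \sum_{k=1}^t k^{-\gamma}$ satisfies the two data-independent hypotheses of Theorem \ref{thm:nosadam}, and then plug this special structure into the three-term regret bound and simplify each term using standard series estimates.

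First I would verify conditions 1 and 2. Condition 1 ($B_t/t$ non-increasing) is equivalent to $b_t \leq B_{t-1}/(t-1)$, which holds because $\{k^{-\gamma}\}$ is non-increasing, so each new term is at most the running average of the previous ones. For condition 2, I rewrite $B_t/(tb_t^2) = B_t \cdot t^{2\gamma-1}$ and reduce the claim to
\[
\frac{B_t}{B_{t-1}} \;\geq\; \Bigl(1-\tfrac{1}{t}\Bigr)^{2\gamma-1}.
\]
When $\gamma \geq 1/2$ the right-hand side is $\leq 1$ and the inequality is immediate. When $0 \leq \gamma < 1/2$, I apply the Bernoulli-type bound $(1 + 1/(t-1))^{1-2\gamma} \leq 1 + (1-2\gamma)/(t-1)$ to reduce the claim to $(t-1)\,t^{-\gamma} \geq (1-2\gamma)\,B_{t-1}$, which follows from the integral comparison $B_{t-1} \leq (t-1)^{1-\gamma}/(1-\gamma)$ together with $\tfrac{1-2\gamma}{1-\gamma} \leq 1$.

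Once both conditions are in hand, I invoke Theorem \ref{thm:nosadam} and process each of the three terms of its regret bound. The first term is retained verbatim. For the second term I use that $v_{t,i}$ is a convex combination of $\{g_{k,i}^2\}$, all bounded by $G_\infty^2$, so $v_{t,i}^{1/2} \leq G_\infty$; substituting $\beta_{1,t} = \beta_1\lambda^{t-1}$ and $\alpha_t = \alpha/\sqrt{t}$ reduces the $t$-sum to $\sum_{t=1}^T \sqrt{t}\,\lambda^{t-1}$, which is upper bounded by the arithmetico-geometric series $\sum_{t=1}^\infty t\,\lambda^{t-1} = 1/(1-\lambda)^2$, giving exactly the middle term. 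For the third term, $\sum_{t=1}^T b_t g_{t,i}^2 \leq G_\infty^2 B_T$ simplifies the square root to $G_\infty B_T/(b_T\sqrt{T})$; the hyperharmonic asymptotic $B_T \sim T^{1-\gamma}/(1-\gamma)$ together with $b_T = T^{-\gamma}$ yields $B_T/(b_T\sqrt{T}) = O(\sqrt{T})$, and the $\gamma$-dependent constant is absorbed into the stated pre-factor of $2$.

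The main obstacle will be condition 2 in the regime $0 \leq \gamma < 1/2$, because there the factor $t^{2\gamma-1}$ is decreasing in $t$ and one must use the growth of $B_t$ to compensate. Handling small $t$ (say $t = 2$, where the integral comparison for $B_{t-1}$ is loose) also requires a direct check separate from the asymptotic argument. A secondary bookkeeping issue is tracking the constant in the third term: the naive bound picks up a factor $1/(1-\gamma)$, and the clean pre-factor $2$ in the statement is obtained only after absorbing this into the assumed range of $\gamma$ and applying $B_T/(b_T\sqrt{T}) \leq 2\sqrt{T}$ uniformly.
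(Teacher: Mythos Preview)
Your proposal is correct and matches the paper's (implicit) approach: the corollary is stated without a separate proof, as a direct specialization of Theorem~\ref{thm:nosadam}, so verifying the two conditions for $b_k=k^{-\gamma}$ and then simplifying each of the three regret terms via $v_{t,i}^{1/2}\le G_\infty$, $\sum_t\sqrt{t}\,\lambda^{t-1}\le(1-\lambda)^{-2}$, and $\sum_t b_t g_{t,i}^2\le G_\infty^2 B_T$ is exactly the intended derivation. The bookkeeping issues you flag (Condition~2 in the regime $0\le\gamma<1/2$, and the $\gamma$-dependent constant hidden in the pre-factor~$2$ of the third term) are real but minor, and the paper leaves them implicit.
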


Our theory shows that the proposed NosAdam achieves convergence rate of $O(1/\sqrt{T})$, which is so far the best known convergence rate.

\section{Why Nostalgic?}
In this section, we investigate more about the mechanism behind Adam and AMSGrad, and analyze the pros and cons of being ``nostalgic''.

As mentioned in Section 1, \cite{j.2018on} proved that if $\Gamma_t$ is positive semi-definite, Adam converges. Otherwise, it may diverge. An example of divergence made by \cite{j.2018on} is
\begin{equation}\label{fx}
  f_t(x)=
  \begin{cases}
  Cx& \text{t\; mod\; 3=1}\\
  -x& \text{otherwise }
  \end{cases},
\end{equation}
where $C$ is slightly larger than 2. The correct optimization direction should be -1, while Adam would go towards 1. To fix this, they proposed AMSGrad, which ensures $\Gamma_t \succeq 0$ by updating $v_t$ as follows
\begin{gather*}
 v_t = \beta_2 v_{t-1} + (1-\beta_2) g_t^2,\\
 \hat{v}_t = \text{max}(\hat{v}_{t-1}, v_t),
\end{gather*}
where $\hat{v}_t$ is used in the update step.

However, this example is not representative of real situations. Also, the explanation of ``long-term memory" by \cite{j.2018on} is not very illustrative. In the remaining part of this section, we aim to discuss some more realistic senarios and try to understand the pros and cons of different algorithms.

\begin{figure}[tb]
\centering
\includegraphics[width=0.6\textwidth]{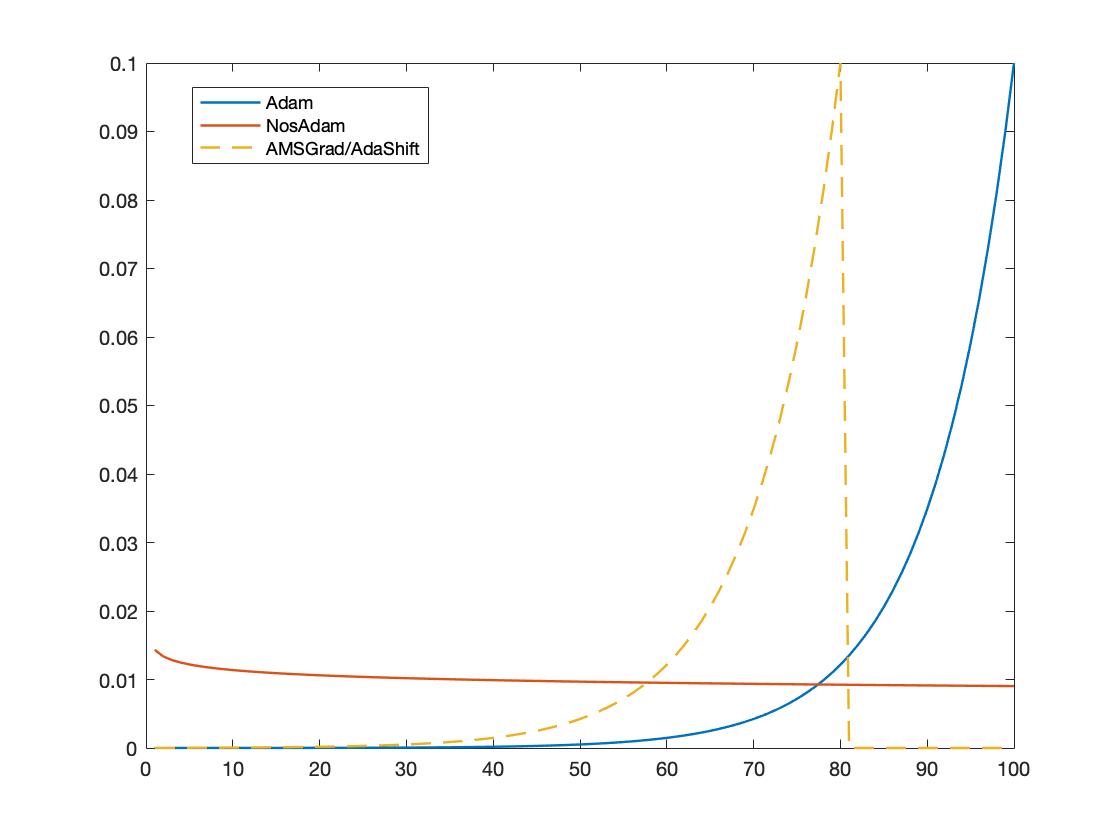}
\caption{Weight comparison among Adam, NosAdam and AMSGrad. The $y$-axis shows the weight of each step, and the $x$-axis shows the number of steps }
\label{weight_compare}
\end{figure}

We start from analyzing the different weighting strategies when calculating $v_t$.
For Adam, $$v_t^{(\text{Adam})} = \sum_{k=1}^{\infty} (1-\beta_2)\beta_2^{t-k} g_k^2,$$ and the weight $(1-\beta_2)\beta_2^{t-k}$ increases exponentially.
For NosAdam, $$v_t^{(\text{NosAdam})} = \sum_{k=1}^t g_k^2 \frac{b_k}{B_t},$$ and for NosAdam-HH, $b_k = k^{-\gamma}$ is the $k$-th term of a hyperharmonic series.
For AMSGrad, $v_t^{(\text{AMSGrad})}$ is data-dependent and therefore cannot be explicitly expressed. However, $v_t^{(\text{AMSGrad})}$ is chosen to be the largest in $\{v_s^{(\text{Adam})}: 0\le s\le t\}$. Therefore, it can be seen as a shifted version of $v_t^{(\text{Adam})}$, i.e. $v_s^{(\text{Adam})}=v_{t-n}^{(\text{Adam})}$, where $n$ depends on the data. This is similar as AdaShift, where $n$ is instead a hyperparameter. Figure \ref{weight_compare} plots the first 100 weights of Adam, NosAdam and AMSGrad, where $\beta_2$, $\gamma$, $n$, is chosen as 0.9, 0.1 and 20, respectively.

From the above analysis, we can see that $v_t$ of Adam is mainly determined by its most current gradients. Therefore, when $g_t$ keeps being small, the adaptive learning rate could be large, which may lead to oscillation of the sequence, and increasing chance of being trapped in local minimum. On the other hand, NosAdam adopts a more stable calculation of $v_t$, since it relies on all the past gradients.

We support the above discussion with an example of an objective function with a bowl-shaped landscape where the global minima is at the bottom of the bowl with lots of local minimum surrounding it. The explicit formula of the objective function is
\begin{align*}
f(x,&y,z) = -ae^{-b((x-\pi)^2 + (y-\pi)^2) + (z-\pi)^2)} \\
&- c \sum_i \text{cos}(x)\text{cos}(y)e^{-\beta ((x-r\text{sin}(\frac{i}{2})-\pi)^2+(y-r\text{cos}(\frac{i}{2})-\pi)^2)}.
\end{align*}\label{example}
Figure \ref{adam_diverge}a shows one slice of the function for $z=2.34$. In the function, $a$ and $b$ determine the depth and width of the global minima, and $c$, $r$, $\beta$ determine depth, location and width of the local minimums. In this example, $a$, $b$, $c$, $r$, $\beta$ are set to 30, 0.007, 0.25, 1, 20, respectively.
\begin{figure}[tb]
  \centering
  \subfloat[Bowl-shaped Landscape]{\label{1}\includegraphics[width=0.5\textwidth]{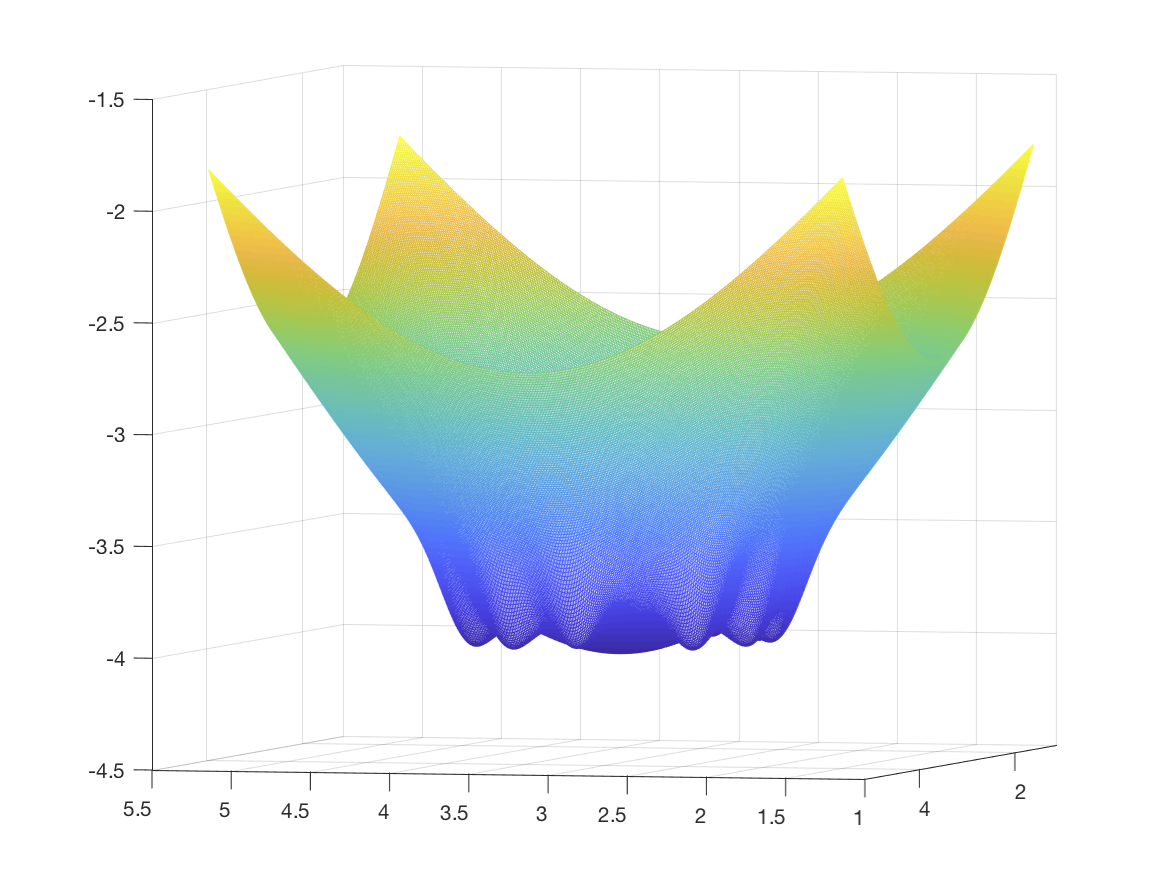}}
  \subfloat[Trajectories of Adam and NosAdam]{\label{2}\includegraphics[width=0.5\textwidth]{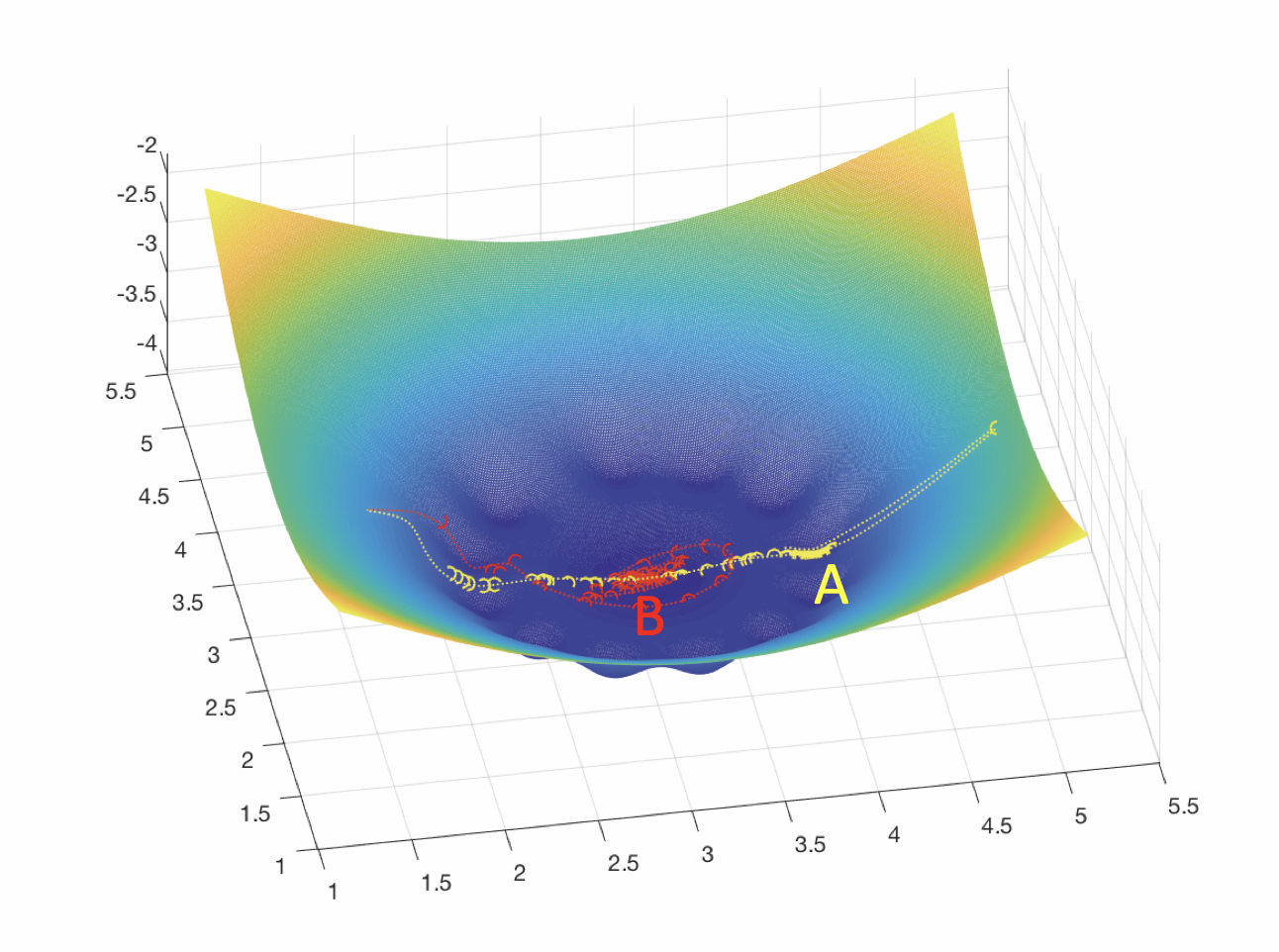}}
\caption{Example of an objective function with a bowl-shaped landscape. It has a wide global minima and some sharp local minimum surrounding it. In \ref{2}, the red trajectory is the sequence generated by NosAdam and yellow trajectory by Adam. The trajectory of Adam ends up in valley $A$ and NosAdam in valley $B$. This shows that Adam would easily diverge due to unstable calculations of $v_t$.}
\label{adam_diverge}
\end{figure}

Figure \ref{adam_diverge}b shows different trajectories of Adam and NosAdam when they are initiated at the same point on the side of the bowl. As expected, the trajectory of Adam (yellow) passes the global minima and ends up trapped in valley $A$, while NosAdam (red) gradually converges to the global minima, i.e. valley $B$.
\begin{figure}[tb]
\centering
\includegraphics[width=0.7\textwidth]{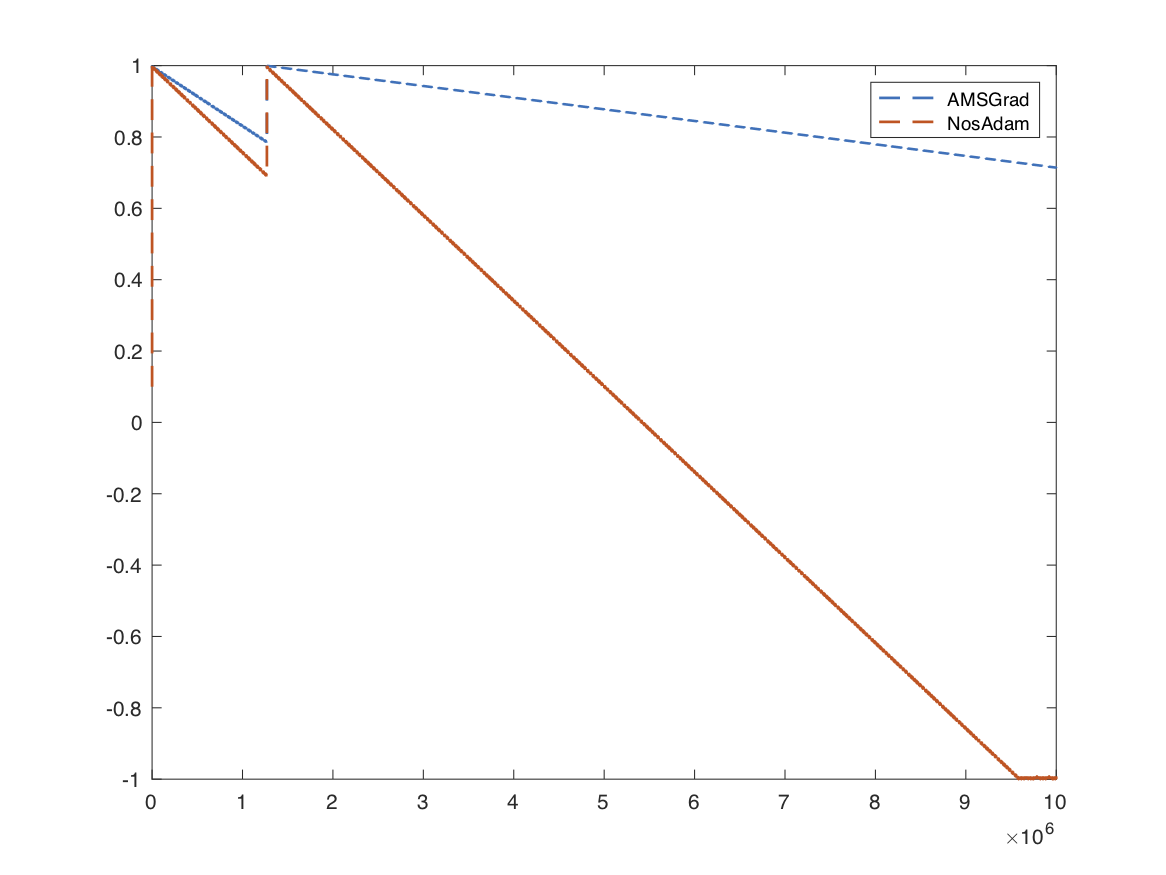}
\caption{Appearance of a large gradient at around $10^6$ step. The $y$-axis shows the value $x$, and the $x$-axis shows the number of iterations. The figure shows AMSGrad is greatly slowed down after encountering a large gradient.}
\label{amsgrad_slow}
\end{figure}
\begin{figure}[tb]
  \centering

  \subfloat[Sharper Minima]{\label{11}\includegraphics[width=0.5\textwidth]{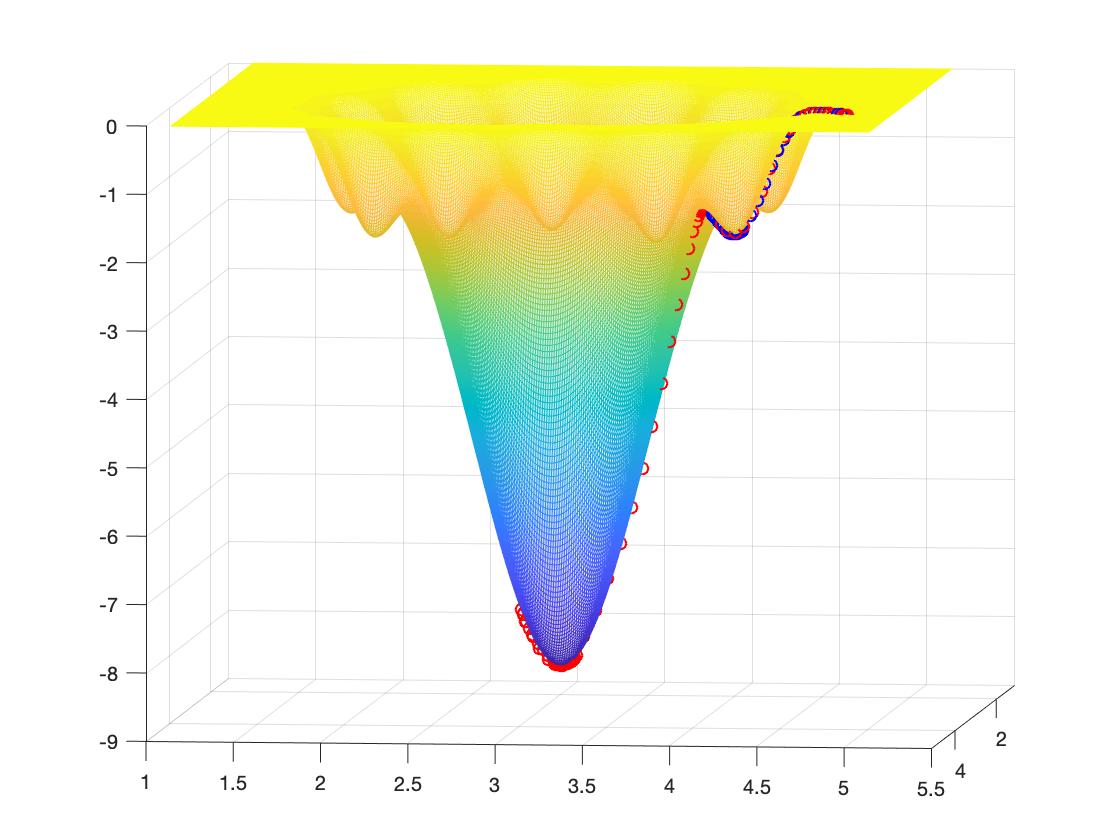}}
  \subfloat[Trajectories of AMSGrad and NosAdam]{\label{22}\includegraphics[width=0.5\textwidth]{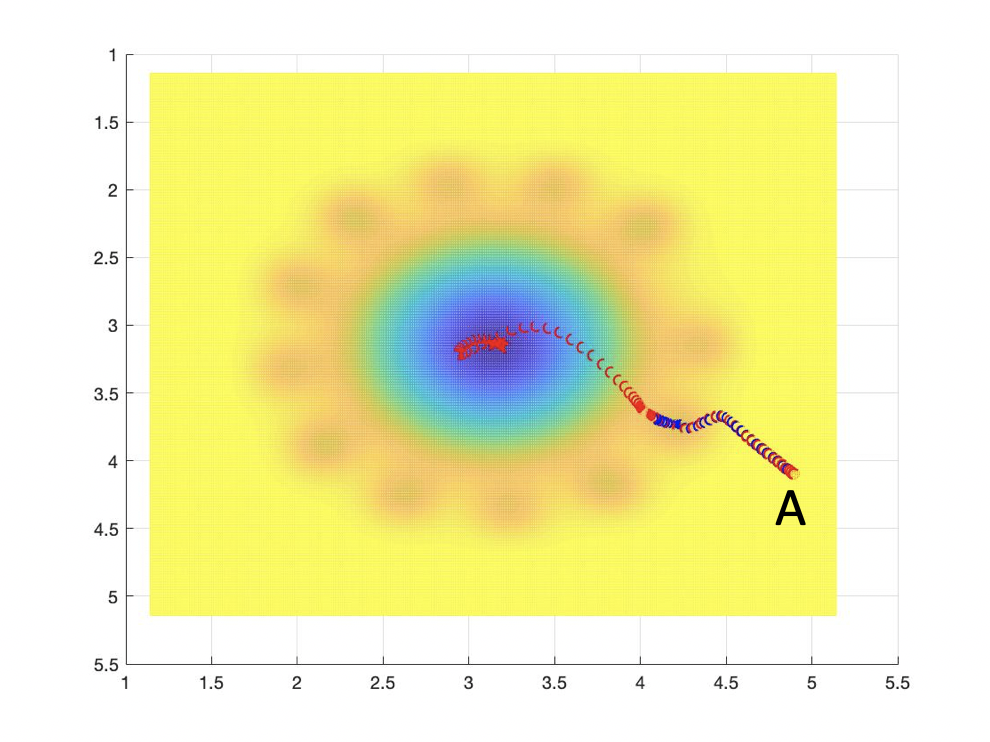}}

\caption{Figure \ref{11} shows the landscape of the objective function. Figure \ref{22} shows the different behaviours of AMSGrad and NosAdam, with the sequence generated by AMSGrad colored in blue and NosAdam in red, and they are initiated at location A.}
\label{amsgrad_trap}
\end{figure}

With the operation of taking max, AMSGrad does not have the same non-convergence issue as discussed above.
However, taking max may be problematic as well since \emph{$\hat{v}_t$ can never decrease}. If a very large gradient appears at an iteration, then the adaptive learning rate for all later steps will keep being small. For example, if a large gradient (e.g. 100 times of the original gradient) appears at the $10^6$ step in the example \eqref{fx}, we can see that AMSGrad will converge very slowly. This, however, will not be a problem for NosAdam which has the ability of both increasing and decreasing its $v_t$. See Figure \ref{amsgrad_slow} for a demonstration. Another example with sharper local minima by setting $b=2$, $c=4$, $r=1.3$ is given in Figure \ref{amsgrad_trap}; and the algorithms are initialized at location A. One can see that the sequence generated by AMSGrad is trapped in a sharp local minima, whereas NosAdam still converges to the global minimum. From these examples we can see that the operation of taking max of AMSGrad has some intrinsic flaws though it promises convergence. The way of computing $v_t$ in NosAdam seems superior.

There are also situations in which NosAdam can work poorly. Just because NosAdam is nostalgic, it requires a relatively good initial point to achieve good performances though this is commonly required by most optimization algorithms. However, Adam can be less affected by bad initializations sometime due to its specific way of calculating $v_t$. This gives it a chance of jumping out of the local minimum (and a chance of jumping out of the global minima as well as shown in Figure \ref{adam_diverge}). To demonstrate this,  we let both Adam and NosAdma initialize in the valley A (see Figure \ref{nosadam_trap}). We can see that the trajectory of Adam manages to jump out of the valley, while it is more difficult for NosAdam to do so.

We note that although NosAdam requires good initialization, it does not necessarily mean initializing near the global minima. Since the algorithm is nostalgic, as long as the initial gradients are pointing towards the right direction, the algorithm may still converge to the global minima even though the initialization is far away from the global minima. As we can see from Figure \ref{amsgrad_trap} that NosAdam converges because all of the gradients are good ones at the beginning of the algorithm, which generates enough momentum to help the sequence dashes through the region with sharp local minimum.

Like any Adam-like algorithm, the convergence of NosAdam depends on the loss landscape and initialization. However, if the landscape is as shown in the above figures, then NosAdam has a better chance to converge than Adam and AMSGrad. In practice, it is therefore helpful to first examine the loss landscape before selecting an algorithm. However, it is time consuming to do in general. Nonetheless, earlier studies showed that neural networks with skip connections like ResNet and DenseNet lead to coercive loss functions similar to the one shown in the above figures \cite{visualloss}.

\begin{figure}[tb]
  \centering

  \subfloat[Sharper Minima]{\label{111}\includegraphics[width=0.5\textwidth]{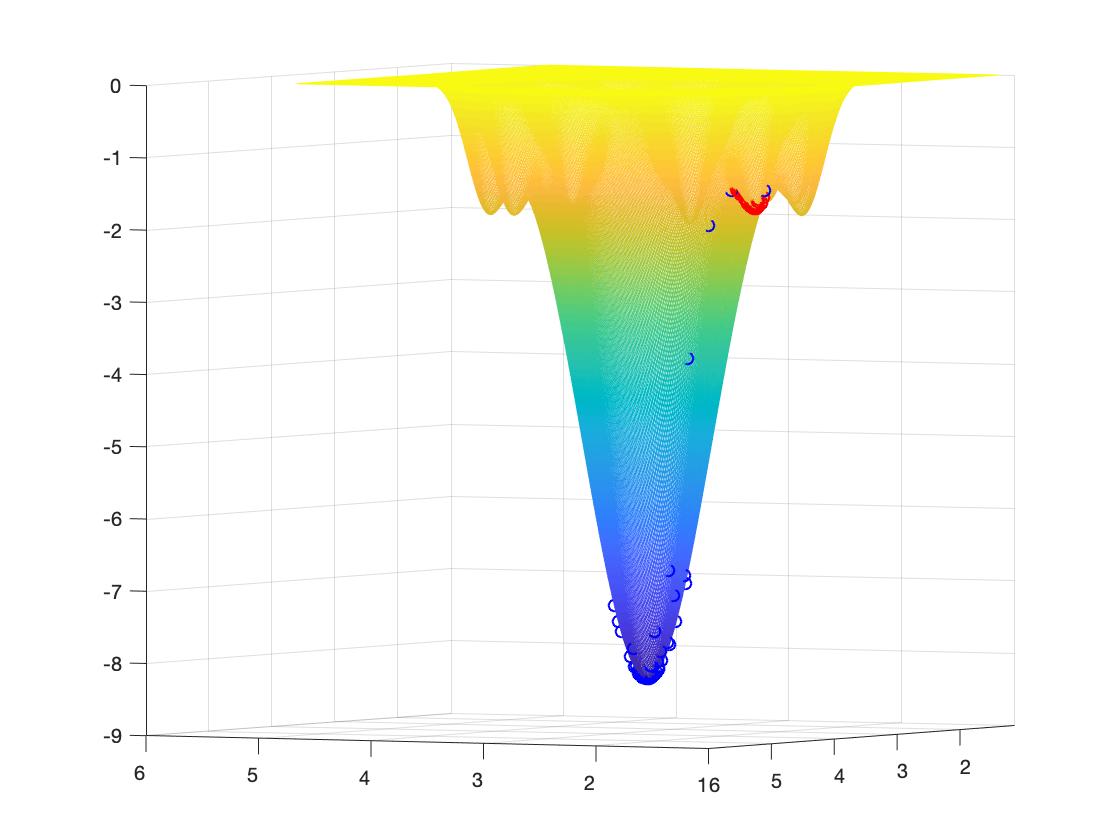}}
  \subfloat[Trajectories of Adam and NosAdam]{\label{222}\includegraphics[width=0.5\textwidth]{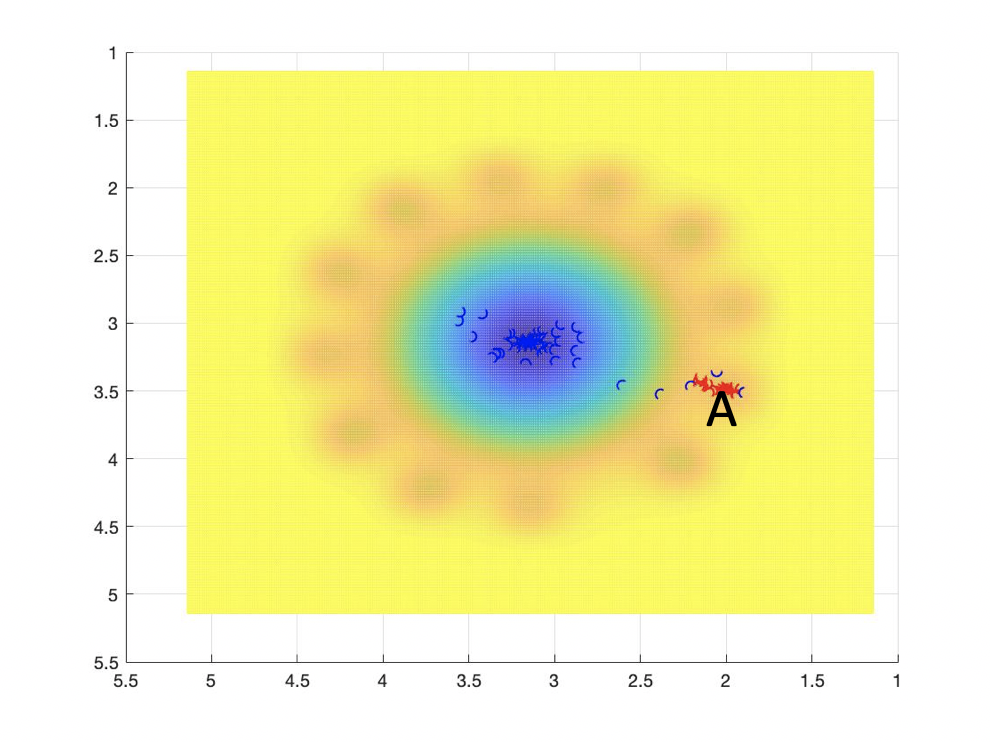}}

\caption{Figure \ref{111} shows the landscape of the objective function. Figure \ref{222} shows the different behaviours of Adam and NosAdam, with the sequence generated by Adam colored in blue and NosAdam in red, and they are initiated in the valley A.}
\label{nosadam_trap}
\end{figure}

\section{Experiments}

In this section, we conduct some experiments to compare NosAdam with Adam and its variant AMSGrad. We consider the task of multi-class classification using logistic regression, multi-layer fully connected neural networks and deep convolutional neural networks on MNIST \cite{10027939599} and CIFAR-10 \cite{cifar10}. The results generally indicate that NosAdam is a promising algorithm that works well in practice.

Throughout our experiments, we fixed $\beta_1$ to be 0.9, $\beta_2$ to be 0.999 for Adam and AMSGrad, and search $\gamma$ in $ \{\text{1e-1},\text{1e-2},\text{1e-3}, \text{1e-4}\}$ for NosAdam. The initial learning rate is chosen from $\{\text{1e-3}, \text{2e-3}, ..., \text{9e-3}, \text{1e-2}, \text{2e-2}, ..., \text{9e-2}, \text{1e-1}, \text{2e-1}, ..., \text{9e-1}\}$ and the results are reported using the best set of hyper-parameters. All the experiments are done using Pytorch0.4.

\textbf{Logistic Regression}
To investigate the performance of the algorithms on convex problems, we evaluate Adam, AMSGrad and NosAdam on multi-class logistic regression problem using the MNIST dataset. To be consistent with the theory, we set the step size $\alpha_t = \alpha/\sqrt{t}$. We set the minibatch size to be 128. According to Figure \ref{logis}, the three algorithms have very similar performance.

\textbf{Multilayer Fully Connected Neural Networks}

We first train a simple fully connected neural network with 1 hidden layer (with 100 neurons and ReLU as the activation function) for the multi-class classification problem on MNIST. We use constant step size $\alpha_t =\alpha$ throughout the experiments for this set of experiments. The results are shown in Figure \ref{multi}. We can see that NosAdam slightly outperforms AMSGrad, while Adam is much worse than both NosAdam and AMSGrad and oscillates a lot. This is due to the difference of the definition of $v_t$ for each algorithm: $v_t$ in AMSGrad and NosAdam gradually becomes stationary and stays at a good re-scaling value; while $v_t$ in Adam does not have such property.

\textbf{Deep Convolutional Neural Networks}
Finally, we train a deep convolutional neural network on CIFAR-10. Wide Residual Network \cite{DBLP:journals/corr/ZagoruykoK16} is known to be able to achieve high accuracy with much less layers than ResNet \cite{DBLP:journals/corr/HeZRS15}. In our experiment, we choose Wide ResNet28. The model is trained on 4 GPUs with the minibatch size 100. The initial learning rate is decayed at epoch 50 and epoch 100 by multiplying 0.1. In our experiments, the optimal performances are usually achieved when the learning rate is around 0.02 for all the three algorithms. For reproducibility, an anonymous link of code will be provided in the supplementary material.

Our results are shown in Figure \ref{exp2}. We observe that NosAdam works slightly better than AMSGrad and Adam in terms of both convergence speed and generalization. This indicates that NosAdam is a promising alternative to Adam and its variants.

\begin{figure}[tb]
  \centering

  \subfloat[Logistic Regression]{\label{logis}\includegraphics[width=0.5\textwidth]{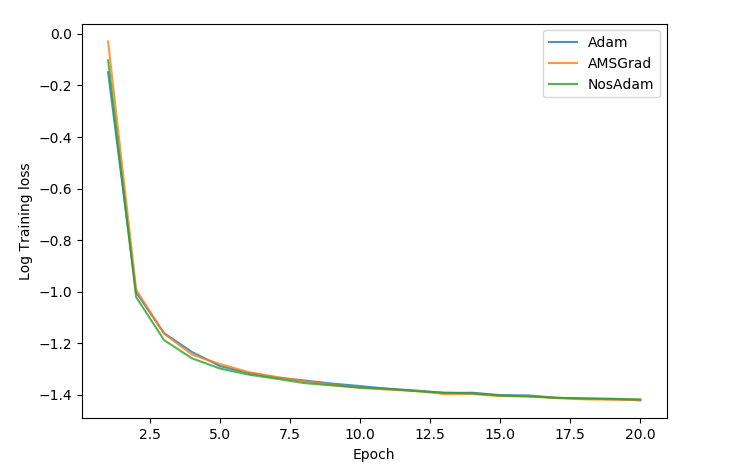}}
  \subfloat[Multi-layer Fully Connected Neural Network]{\label{multi}\includegraphics[width=0.5\textwidth]{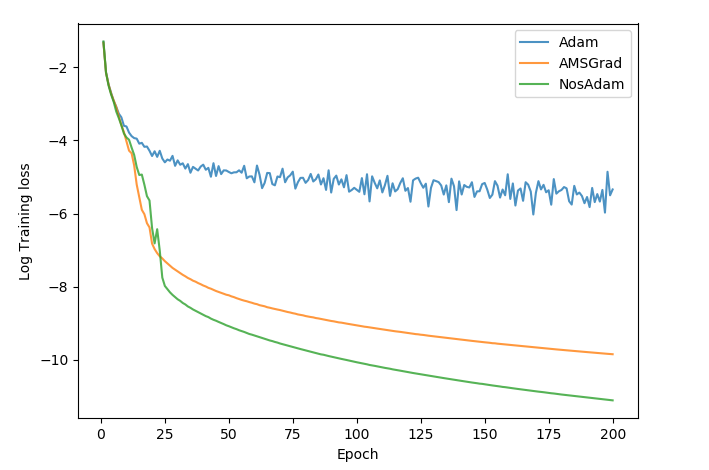}}

\caption{Experiments of logistic regression and multi-layer fully connected neural network on MNIST.}
\label{exp1}
\end{figure}

\begin{figure}[tb]
  \centering

  \subfloat[Log Training Loss]{\label{figur:1}\includegraphics[width=0.5\textwidth]{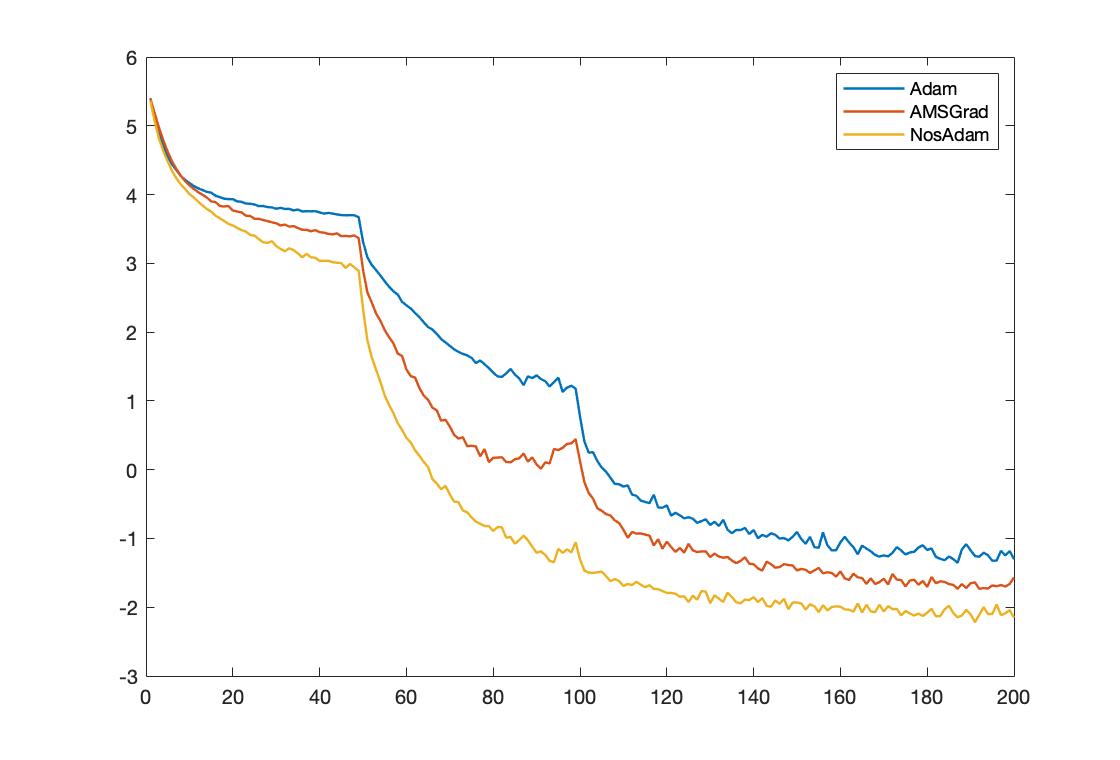}}
  \subfloat[Test Accuracy]{\label{figur:2}\includegraphics[width=0.5\textwidth]{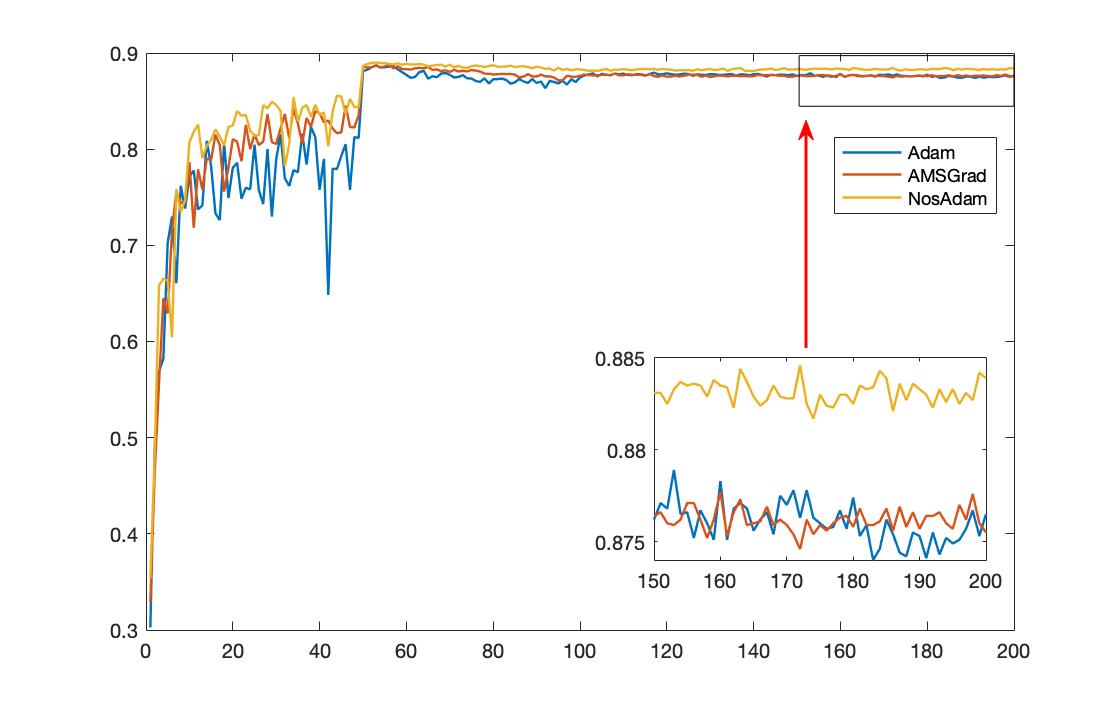}}

\caption{Experiments of Wide ResNet on CIFAR-10.}
\label{exp2}
\end{figure}
\section{Discussion}

In this paper, we suggested that we should weight more of the past gradients when designing the adaptive learning rate. In fact, our original intuition came from mathematical analysis of the convergence of Adam-like algorithms. Based on such observation, we then proposed a new algorithm called Nostalgic Adam (NosAdam), and provided a convergence analysis. We also discussed the pros and cons of NosAdam comparing to Adam and AMSGrad using a simple example, which gave us a better idea when NosAdam could be effective.

For future works, we believe that loss landscape analysis and the design of a strategy to choose different algorithms adaptively based on the loss landscape would be worth pursuing. Hopefully, we can design an optimization algorithm that can adaptively adjust its re-scaling term in order to fully exploit the local geometry of the loss landscape.

\section*{Acknowledgments}
This work would not have existed without the support of BICMR and School of Mathematical
Sciences, Peking University. Bin Dong is supported in part by  Beijing Natural Science Foundation (Z180001).

\newpage

\begin{appendices}


\section{Convergence of p-NosAdam}

In this appendix, we use the same notations as in the paper ``Nostalgic Adam: Weighting more of the past gradients when designing the adaptive learning rate''. We are going to prove a more general convergence theorem. In the original paper, we propose NosAdam, as shown in Algorithm \ref{nosadam}. But in fact, NosAdam can be considered as a particular case of a more general algorithm, in which we replaces $g_t^2$ in the calculation of $v_t$ by $g_p$, and $v_t^{1/2}$ in the update equation by $v_t^{1/p}$. We call this algorithm p-NosAdam, as shown in Algorithm \ref{pnosadam}. NosAdam is p-NosAdam when $p=2$.

In the remaining part of this appendix, we are going to prove the convergence theorem of p-NosAdam when $p>1$. From Theorem \ref{thm:nosadam}, we can see that the regret bound is $O(T^{\text{max}(\frac{1}{p}, \frac{p-1}{p})})$.


\begin{algorithm}
\caption{p-NosAdam Algorithm}
\label{pnosadam}
\textbf{Input}: $x \in F$, $m_0 = 0$, $V_0 = 0$
\begin{algorithmic}[1]
\FOR{$t = 1 $ \textbf{to} $T$}
    \STATE $g_t = \nabla f_t(x_t)$
    \STATE $\beta_{2,t} = B_{t-1}/B_t$, where $B_t = \sum_{k=1}^t b_k$ for $t \geq 1$, $b_k\ge0$ and $B_0 = 0$
    \STATE $m_t = \beta_1 m_{t-1} + (1-\beta_1)g_t$
    \STATE $V_t = \beta_{2,t}V_{t-1} + (1-\beta_{2,t})g_t^2$
    \STATE $\hat{x}_{t+1} = x_t - \alpha_t m_t / \sqrt{V_t}$
    \STATE $x_{t+1} = \mathcal{P}_{\mathcal{F}, \sqrt{V_t}}(\hat{x}_{t+1})$
\ENDFOR
\end{algorithmic}
\end{algorithm}

\begin{theorem}[Convergence of p-NosAdam]\label{thm:nosadam}
  Let $B_t$ and $b_k$ be the sequences defined in p-NosAdam, $\alpha_t=\alpha/t^{1/p}, p>1$, $\beta_{1,1} = \beta_1, \beta_{1,t} \leq \beta_1$ for all t. Assume that $\mathscr{F}$ has bounded diameter $D_{\infty}$ and $||\nabla f_t(x)||_{\infty} \leq G_{\infty}$ for all t and $x \in \mathscr{F}$. Furthermore, let $\beta_{2,t}$ be such that the following conditions are satisfied:
  \begin{align*}\label{constraints}
  1.& \frac{B_t}{t} \leq \frac{B_{t-1}}{t-1}\\
  2.& \frac{B_t}{t b_t^p} \geq \frac{B_{t-1}}{(t-1) b_{t-1}^p}
  \end{align*}
  
  Then for $\{x_t\}$ generated using p-NosAdam, we have the following bound on the regret
  
  $$R_T \leq \frac{D_{\infty}^2}{2\alpha(1-\beta_1)} \sum_{i=1}^d T^{\frac{1}{p}}v_{T,i}^{\frac{1}{p}} +
  \frac{D_{\infty}^2}{2(1-\beta_1)}\sum_{t=1}^T\sum_{i=1}^d \frac{\beta_{1,t}v_{t,i}^{\frac{1}{p}}}{\alpha_t}
  + \frac{\alpha (\beta_1+1)}{(1-\beta_1)^3} \sum_{i=1}^d  (\sum_{t=1}^T b_t g_{t,i}^p)^{\frac{p-1}{p}}(\frac{B_T}{T b_T^p})^{\frac{1}{p}} $$
  
\end{theorem}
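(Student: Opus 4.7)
The plan is to follow the standard online-convex-optimization template used for AMSGrad and the $p=2$ case of NosAdam, generalizing every $\sqrt{\,\cdot\,}$ to $(\,\cdot\,)^{1/p}$ and replacing Cauchy--Schwarz by Hölder's inequality with conjugate exponents $p$ and $p/(p-1)$. By convexity, $f_t(x_t) - f_t(x^{\ast}) \leq \langle g_t, x_t - x^{\ast}\rangle$, so it suffices to control $\sum_{t=1}^T \langle g_t, x_t - x^{\ast}\rangle$ where $x^{\ast}$ is a minimizer over $\mathcal{F}$. I would start from the non-expansiveness of the $V_t^{1/p}$-weighted projection,
\[
\|V_t^{1/(2p)}(x_{t+1}-x^{\ast})\|^2 \leq \|V_t^{1/(2p)}(x_t - \alpha_t V_t^{-1/p} m_t - x^{\ast})\|^2,
\]
and expand the right-hand side to get, coordinate by coordinate, an upper bound on $2\alpha_t\langle m_t, x_t - x^{\ast}\rangle$ in terms of the weighted-distance decrement $\|V_t^{1/(2p)}(x_t-x^{\ast})\|^2 - \|V_t^{1/(2p)}(x_{t+1}-x^{\ast})\|^2$ together with a residual $\alpha_t^2\|V_t^{-1/(2p)} m_t\|^2$. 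Writing $m_t = \beta_{1,t}m_{t-1} + (1-\beta_{1,t})g_t$ and applying Young's inequality to the cross term $\langle m_{t-1}, x_t-x^{\ast}\rangle$ isolates $\langle g_t, x_t - x^{\ast}\rangle$ on the left.

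Next I would sum the per-step bound over $t = 1,\ldots,T$. The weighted-distance differences telescope as $\sum_t\bigl(v_{t,i}^{1/p}/\alpha_t - v_{t-1,i}^{1/p}/\alpha_{t-1}\bigr) = T^{1/p} v_{T,i}^{1/p}/\alpha$, and positivity of each summand is exactly the $p$-analog of $\Gamma_t \succeq 0$; this follows from condition~1 by the obvious $p$-generalization of the earlier Lemma (replace $g_j^2$ with $g_j^p$ and the exponent $1/2$ with $1/p$). Combined with $|(x_t-x^{\ast})_i|\leq D_{\infty}$, this produces the leading term $\frac{D_{\infty}^2}{2\alpha(1-\beta_1)}\sum_i T^{1/p} v_{T,i}^{1/p}$ and, from the $\beta_{1,t}$ summands peeled off during the telescoping, the middle term $\frac{D_{\infty}^2}{2(1-\beta_1)}\sum_{t,i}\beta_{1,t}v_{t,i}^{1/p}/\alpha_t$.

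The main obstacle is bounding the momentum residual $\sum_t \alpha_t \sum_i m_{t,i}^2 / v_{t,i}^{1/p}$. After expanding $m_{t,i}$ as a geometric-weighted sum of past gradients, applying Hölder with the appropriate exponents, swapping the order of summation, and absorbing a $(1-\beta_1)^{-2}$ geometric factor, the task reduces to controlling $\sum_t \alpha_t g_{t,i}^2 / v_{t,i}^{1/p}$. Substituting $v_{t,i} = B_t^{-1}\sum_{k\leq t}b_k g_{k,i}^p$ and invoking condition~2, which yields $(B_t/t)^{1/p} \leq (B_T/(T b_T^p))^{1/p}\, b_t$ for every $t \leq T$, lets me pull the factor $(B_T/(T b_T^p))^{1/p}$ outside. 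What remains has the shape $\sum_t b_t g_{t,i}^{2}/A_t^{1/p}$ with $A_t = \sum_{k\leq t} b_k g_{k,i}^p$; this is handled by a Hölder application with exponents $p/(p-1)$ and $p$ together with the standard telescoping inequality $\sum_t a_t/A_t^{1/p} \leq \tfrac{p}{p-1} A_T^{(p-1)/p}$, producing the factor $(\sum_t b_t g_{t,i}^p)^{(p-1)/p}$. Combining with the $(\beta_1+1)/(1-\beta_1)^3$ prefactor accumulated from the Young's-inequality bookkeeping on the $\beta_{1,t}$ cross term gives the third term of the stated regret bound. The delicate exponent tracking in the Hölder step, and the precise use of condition~2 to expose the $B_T/(T b_T^p)$ factor, are where the bulk of the technical work lies; the rest parallels the standard AMSGrad-style argument.
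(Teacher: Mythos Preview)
Your plan mirrors the paper's proof step for step: the projection/Young decomposition into the weighted-distance, $\beta_{1,t}$-cross, and momentum-residual pieces; the telescoping of $v_{t,i}^{1/p}/\alpha_t$ under condition~1; and the residual bound obtained by unrolling $m_t$, applying Cauchy--Schwarz, swapping the order of summation, and invoking the $p$-telescoping inequality $1/S_j^{1/p}\le\frac{p}{p-1}\bigl(S_j^{(p-1)/p}-S_{j-1}^{(p-1)/p}\bigr)/(S_j-S_{j-1})$. The only cosmetic difference is where condition~2 enters---you use it up front via $(B_t/t)^{1/p}\le(B_T/(Tb_T^p))^{1/p}\,b_t$ to pull the global factor outside, whereas the paper first telescopes and then applies an Abel summation using the monotonicity of $B_j/(jb_j^p)$; the two manoeuvres are equivalent.
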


\subsection*{Proof of Theorem \ref{thm:nosadam}:}
Recall that \begin{equation}\label{regret}
  R_T = \sum_{t=1}^T f_t(x_t) - \min_{x\in \mathcal{F}} \sum_{t=1}^T f_t(x).
\end{equation}
Let $x^* = \text{argmin}_{x\in \mathcal{F}}\sum_{t=1}^T f_t(x)$. Therefore $R_T = \sum_{t=1}^{T} f_t(x_t)-f_t(x^*) $.

To prove this theorem, we will use the following lemmas.

\begin{lemma}\label{all}
  \begin{align*}
  \sum_{t=1}^{T} f_t(x_t)-f_t(x^*)&\leq \sum_{t=1}^{T} [\frac{1}{2\alpha_t(1-\beta_{1t})}( ||V_t^{1/2p}(x_t-x^*)||^2\\
 &\hspace*{0.2in}-||V_t^{1/2p}(x_{t+1}-x^*)||^2)+\frac{\alpha_t}{2(1-\beta_{1t})}||V_t^{-1/2p}m_t||^2\\
  &\hspace*{0.4in}+\frac{\beta_{1t}}{2(1-\beta_{1t})}\alpha||V_t^{-1/2p}m_{t+1}||^2 +\frac{\beta_{1t}}{2\alpha_t(1-\beta_{1t})} ||V_t^{1/2p}(x_t-x^*)||^2]
  \end{align*}
\end{lemma}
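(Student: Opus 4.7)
The plan is to follow the standard online convex optimization template used in the AMSGrad/Adam regret analyses, adapting it to the $p$-norm geometry induced by $V_t^{1/p}$. First I would invoke convexity of $f_t$ to write $f_t(x_t)-f_t(x^*) \leq \langle g_t, x_t - x^*\rangle$, so that the entire task reduces to bounding $\langle g_t, x_t - x^*\rangle$ in terms of a telescoping ``distance'' difference under the metric $V_t^{1/p}$ plus controllable error terms.

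Next I would exploit the update rule. Writing $\hat{x}_{t+1} = x_t - \alpha_t V_t^{-1/p} m_t$ and using the non-expansiveness of the projection $\mathcal{P}_{\mathcal{F}, V_t^{1/p}}$ in its induced norm, one obtains $\|V_t^{1/2p}(x_{t+1}-x^*)\|^2 \leq \|V_t^{1/2p}(\hat{x}_{t+1}-x^*)\|^2$. Expanding the right side via the identity $\|a-b\|^2 = \|a\|^2 - 2\langle a,b\rangle + \|b\|^2$ yields a bound of the form
\begin{equation*}
\|V_t^{1/2p}(x_{t+1}-x^*)\|^2 \leq \|V_t^{1/2p}(x_t-x^*)\|^2 - 2\alpha_t\langle m_t, x_t - x^*\rangle + \alpha_t^2\|V_t^{-1/2p} m_t\|^2,
\end{equation*}
which I would rearrange to isolate $\langle m_t, x_t - x^*\rangle$.

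The third step is to decompose $m_t$ using the recursion $m_t = \beta_{1,t} m_{t-1} + (1-\beta_{1,t}) g_t$, so that $\langle g_t, x_t-x^*\rangle = \frac{1}{1-\beta_{1,t}}\langle m_t, x_t-x^*\rangle - \frac{\beta_{1,t}}{1-\beta_{1,t}}\langle m_{t-1}, x_t-x^*\rangle$. Substituting the bound from step two handles the $m_t$ term, while the residual inner product with $m_{t-1}$ (or, after a suitable shift, with $m_{t+1}$, which matches the index appearing in the statement) is split by Cauchy--Schwarz in the $V_t^{1/p}$ metric followed by Young's inequality $2\langle u,v\rangle \leq \frac{1}{\alpha_t}\|u\|^2 + \alpha_t\|v\|^2$. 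The $u$-part contributes the $\frac{\beta_{1,t}}{2\alpha_t(1-\beta_{1,t})}\|V_t^{1/2p}(x_t-x^*)\|^2$ summand, and the $v$-part contributes the $\frac{\beta_{1,t}}{2(1-\beta_{1,t})}\alpha\|V_t^{-1/2p}m_{t+1}\|^2$ summand, exactly as in the lemma.

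The main obstacle I anticipate is the bookkeeping on the momentum terms: one must be careful that the Young's inequality split produces norms weighted by $V_t^{1/p}$ (not $V_{t-1}^{1/p}$ or $V_{t+1}^{1/p}$), and that the $\beta_{1,t}/(1-\beta_{1,t})$ factors line up across the two contributions. A secondary technicality is that the projection inequality requires $V_t^{1/p} \in \mathcal{S}_d^+$, so one must either argue strict positivity of $V_t$ on the support of $g_t$ or work with a limiting/regularized argument; the rest is routine algebra once these pieces are in place, and summing the resulting per-step inequality from $t=1$ to $T$ yields the claim.
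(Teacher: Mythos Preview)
Your plan is exactly the paper's argument: projection non-expansiveness in the $V_t^{1/p}$ metric, expand the square, substitute $m_t=\beta_{1,t}m_{t-1}+(1-\beta_{1,t})g_t$ to isolate $\langle g_t,x_t-x^*\rangle$, then Cauchy--Schwarz plus Young on the residual $\langle m_{t-1},x_t-x^*\rangle$, and finally convexity and summation. Note that the paper's own derivation produces $\frac{\alpha_t\beta_{1,t}}{2(1-\beta_{1,t})}\|V_t^{-1/2p}m_{t-1}\|^2$, not $m_{t+1}$ with coefficient $\alpha$ as printed in the lemma statement---so the ``suitable shift'' you contemplate is unnecessary; the index $t+1$ and the bare $\alpha$ are typos in the statement.
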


\subsubsection*{Proof of Lemma \ref{all}:}
We begin with the following observation:

$$x_{t+1}=\Pi_{\mathscr{F},V_t^{1/p}} (x_t-\alpha_tV_t^{-1/p}m_t)=\min_{x \in \mathscr{F}}||V_t^{1/2p} (x-(x_t-\alpha_tV_t^{-1/p}m_t))||$$

Using Lemma 4 in  \cite{j.2018on} with $u_1=x_{t+1}$ and $u_2=x^*$, we have the following:
\begin{align*}
||V_t^{1/2p}(x_{t+1}-x^*)||^2 &\leq ||V_t^{1/2p}(x_{t}-\alpha_t V_t^{-1/p}m_t-x^*)||^2\\
&=||V_t^{1/2p}(x_t-x^*)||^2+\alpha^2_t||V_t^{-1/2p}m_t||^2-2\alpha_t\langle m_t,x_t-x^*\rangle\\
&=||V_t^{1/2p}(x_t-x^*)||^2+\alpha^2_t||V_t^{-1/2p}m_t||^2\\
&\hspace*{0.5in}-2\alpha_t\langle \beta_{1t}m_{t-1}+(1-\beta_{1t})g_t,x_t-x^*\rangle
\end{align*}

Rearranging the above inequality, we have

\begin{align*}
\langle g_t,x_t-x^* \rangle\leq \frac{1}{2\alpha_t(1-\beta_{1t})}[ ||V_t^{1/2p}&(x_t-x^*)||^2-||V_t^{1/2p}(x_{t+1}-x^*)||^2]\\
&+\frac{\alpha_t}{2(1-\beta_{1t})}||V_t^{-1/2p}m_t||^2-\frac{\beta_{1t}}{1-\beta_{1t}}\langle  m_{t-1},x_t-x^*\rangle\\
\leq \frac{1}{2\alpha_t(1-\beta_{1t})}[ ||V_t^{1/2p}&(x_t-x^*)||^2-||V_t^{1/2p}(x_{t+1}-x^*)||^2]\\
 & +\frac{\alpha_t}{2(1-\beta_{1t})}||V_t^{-1/2p}m_t||^2+\frac{\alpha_t\beta_{1t}}{2(1-\beta_{1t})}||V_t^{-1/2p}m_{t-1}||^2 \\
 &+\frac{\beta_{1t}}{2\alpha_t(1-\beta_{1t})} ||V_t^{1/2p}(x_t-x^*)||^2
\end{align*}

The second inequality follows from simple application of Cauchy-Schwarz and Young’s inequality.
We now use the standard approach of bounding the regret at each step using convexity of the function $f_t$ in the following manner:
\begin{align*}
&\sum_{t=1}^{T} f_t(x_t)-f_t(x^*)\leq \sum_{t=1}^{T} \langle g_t,x_t-x^* \rangle\\
&\leq \sum_{t=1}^{T} [\frac{1}{2\alpha_t(1-\beta_{1t})}( ||V_t^{1/2p}(x_t-x^*)||^2-||V_t^{1/2p}(x_{t+1}-x^*)||^2)+\frac{\alpha_t}{2(1-\beta_{1t})}||V_t^{-1/2p}m_t||^2\\
&+\frac{\beta_{1t}}{2(1-\beta_{1t})}\alpha||V_t^{-1/2p}m_{t+1}||^2 +\frac{\beta_{1t}}{2\alpha_t(1-\beta_{1t})} ||V_t^{1/2p}(x_t-x^*)||^2]
\end{align*}
This completes the proof of Lemma \ref{all}.

Base on this Lemma, we are going to find the corresponding upper bound for each term in the above regret bound inequality.

For the first term $\sum_{t=1}^{T} [\frac{1}{2\alpha_t(1-\beta_{1t})}( ||V_t^{1/2p}(x_t-x^*)||^2-||V_t^{1/2p}(x_{t+1}-x^*)||^2)$, we have Lemma \ref{semi-positive part}.

\begin{lemma}\label{semi-positive part}
When $B_t/t$ is non-increasing, then $V_t/\alpha^2_t-V_{t-1}/\alpha^2_{t-1}$ is semi-positive, and
$$\sum_{t=1}^{T} \frac{1}{2\alpha_t(1-\beta_{1t})}( ||V_t^{1/2p}(x_t-x^*)||^2-||V_t^{1/2p}(x_{t+1}-x^*)||^2)\leq \frac{T^{1/p}}{2(1-\beta_{1})} \frac{V_t^{1/p}}{\alpha} D_{\infty}^2$$

\end{lemma}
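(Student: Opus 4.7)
The lemma actually packages two claims: (i) the positive semi-definite monotonicity $V_t^{1/p}/\alpha_t \succeq V_{t-1}^{1/p}/\alpha_{t-1}$ (the statement as printed with $V_t/\alpha_t^2$ is the $p=2$ specialization — the general version is what the subsequent telescoping will actually need), and (ii) the telescoped upper bound on the right-hand side. The plan is to mimic the $p=2$ argument from the lemma stated just before Theorem~\ref{thm:nosadam}, and then close the regret bound via the standard Adam-style telescoping trick.

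For (i), I would unroll the recursion for $v_t$. Since $\beta_{2,t}=B_{t-1}/B_t$ and $1-\beta_{2,t}=b_t/B_t$, induction yields the closed form $v_{t,i} = \frac{1}{B_t}\sum_{j=1}^{t} b_j g_{j,i}^{p}$, and therefore
\begin{equation*}
\frac{v_{t,i}^{1/p}}{\alpha_t} \;=\; \frac{1}{\alpha}(t\,v_{t,i})^{1/p} \;=\; \frac{1}{\alpha}\left(\frac{t}{B_t}\sum_{j=1}^{t} b_j g_{j,i}^{p}\right)^{1/p}.
\end{equation*}
By hypothesis $B_t/t$ is non-increasing, so $t/B_t$ is non-decreasing; the partial sum $\sum_{j=1}^{t} b_j g_{j,i}^{p}$ is non-decreasing because $b_j\ge 0$; hence their product is non-decreasing. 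Applying the monotone map $x\mapsto x^{1/p}$ on $[0,\infty)$ then gives $v_{t,i}^{1/p}/\alpha_t \ge v_{t-1,i}^{1/p}/\alpha_{t-1}$ for every coordinate, which (since $V_t$ is diagonal with non-negative entries) is exactly the desired operator inequality.

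For (ii), I would first bound $1/(1-\beta_{1,t}) \le 1/(1-\beta_1)$ to pull $1/(2(1-\beta_1))$ outside the sum, and then apply the standard telescoping rearrangement
\begin{align*}
&\sum_{t=1}^{T}\frac{1}{\alpha_t}\Bigl[\|V_t^{1/(2p)}(x_t-x^*)\|^2 - \|V_t^{1/(2p)}(x_{t+1}-x^*)\|^2\Bigr] \\
&\quad=\; \frac{\|V_1^{1/(2p)}(x_1-x^*)\|^2}{\alpha_1} \;-\; \frac{\|V_T^{1/(2p)}(x_{T+1}-x^*)\|^2}{\alpha_T} \\
&\qquad+\; \sum_{t=2}^{T}(x_t-x^*)^{\top}\left(\frac{V_t^{1/p}}{\alpha_t}-\frac{V_{t-1}^{1/p}}{\alpha_{t-1}}\right)(x_t-x^*).
\end{align*}
Each cross-difference matrix is diagonal and PSD by (i), so I would bound $(x_t-x^*)^{\top} M (x_t-x^*) = \sum_i M_{ii}(x_{t,i}-x^*_i)^2 \le D_\infty^2\,\mathrm{tr}(M)$, and the same bound handles the leading boundary term. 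The trace contributions then telescope cleanly to $\mathrm{tr}(V_T^{1/p})/\alpha_T = (T^{1/p}/\alpha)\sum_i v_{T,i}^{1/p}$, and dropping the non-positive boundary term $-\|V_T^{1/(2p)}(x_{T+1}-x^*)\|^2/\alpha_T$ produces the claimed upper bound.

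The main obstacle is purely bookkeeping; no deep step is required. The single subtle point is that the monotonicity has to survive the $1/p$-th root when passing from the scalar statement $t\,v_{t,i} \ge (t-1)v_{t-1,i}$ to the diagonal operator statement, and this is immediate because each $V_t$ is diagonal with non-negative entries. The specific step-size schedule $\alpha_t=\alpha/t^{1/p}$ enters only to convert $(t\,v_{t,i})^{1/p}/\alpha$ into $v_{t,i}^{1/p}/\alpha_t$, which is precisely the matching needed so that the $B_t/t$ hypothesis turns into the required operator monotonicity.
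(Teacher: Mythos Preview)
Your proposal is correct and follows essentially the same route as the paper: unroll the $v_t$ recursion to the closed form $v_{t,i}=\tfrac{1}{B_t}\sum_{j\le t} b_j g_{j,i}^p$, use non-increasing $B_t/t$ together with the non-decreasing partial sums to get coordinate-wise monotonicity, then telescope and bound each difference by $D_\infty^2$ times the trace. The only (helpful) addition you make explicit, and which the paper leaves implicit, is the passage from monotonicity of $t\,v_{t,i}$ to monotonicity of $v_{t,i}^{1/p}/\alpha_t$ via the monotone map $x\mapsto x^{1/p}$ on diagonal nonnegative matrices; this is exactly what the telescoping step needs, and the paper uses it without comment.
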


\subsubsection*{Proof of Lemma \ref{semi-positive part}:}
  \begin{align*}
  \frac{V_t}{\alpha_t^p}
  =& \frac{t}{\alpha^p}\sum_{j=1}^t \Pi_{k=1}^{t-j} \beta_{2,t-k+1}(1-\beta_{2,j})g_j^p\\
  =& \frac{t}{\alpha^p} \sum_{j=1}^t \frac{B_{t-1}}{B_t} \ldots \frac{B_{j}}{B_{j+1}}\frac{B_{j}-B_{j-1}}{B_{j}} g_j^p \\
  =& \frac{t}{B_t \alpha^p} \sum_{j=1}^t b_j g_j^p
  \geq \frac{t-1}{B_{t-1} \alpha^2} \sum_{j=1}^{t-1} b_j g_j^p\\
  =&  \frac{V_{t-1}}{\alpha_{t-1}^p}
  \end{align*}
  
  which means $V_t/\alpha^2_t-V_{t-1}/\alpha^2_{t-1}$ is semi-positive.

\begin{align*}
    &\sum_{t=1}^{T} \frac{1}{2\alpha_t(1-\beta_{1t})}( ||V_t^{1/2p}(x_t-x^*)||^2-||V_t^{1/2p}(x_{t+1}-x^*)||^2)\\
    \leq & \frac{1}{2(1-\beta_{1})} \sum_{t=1}^{T} \frac{1}{\alpha_t}( ||V_t^{1/2p}(x_t-x^*)||^2-||V_t^{1/2p}(x_{t+1}-x^*)||^2)\\
    \leq&       \frac{1}{2(1-\beta_{1})}(  ||V_1^{1/2p}(x_1-x^*)||^2 - ||V_T^{1/2p}(x_{T+1}-x^*)||^2)
    \\
    +& \frac{1}{2(1-\beta_{1})} \sum_{t=2}^{T} (\frac{V_t^{1/p}}{\alpha_t}-\frac{V_{t-1}^{1/p}}{\alpha_{t-1}})(x_t-x^*)^2\\
    \leq &       \frac{1}{2(1-\beta_{1})}  ||V_1^{1/2p}||^2 D_{\infty}^2
    + \frac{1}{2(1-\beta_{1})} \sum_{t=2}^{T} (\frac{V_t^{1/p}}{\alpha_t}-\frac{V_{t-1}^{1/p}}{\alpha_{t-1}})D_{\infty}^2\\
    =&\frac{T^{1/p}}{2(1-\beta_{1})} \frac{V_t^{1/p}}{\alpha} D_{\infty}^2
    \end{align*}
The third inequation use the knowledge that $V_t/\alpha^2_t-V_{t-1}/\alpha^2_{t-1}\geq 0$.

This completes the proof of Lemma \ref{semi-positive part}.

For the second and the third terms in Lemma \ref{all}, we have Lemma \ref{new one,p>1}.

\begin{lemma}\label{new one,p>1}
  
$$
\frac{\alpha_t}{2(1-\beta_{1t})}||V_t^{-1/2p}m_t||^2+\frac{\alpha_t\beta_{1t}}{2(1-\beta_{1t})}||V_t^{-1/2p}m_{t-1}||^2 \leq  \frac{p}{2(p-1)}\frac{\alpha(1+\beta_1)}{(1-\beta_1)^3} S_T^{\frac{p-1}{p}}  (\frac{B_T}{T b_T^p})^{\frac{1}{p}} $$

\end{lemma}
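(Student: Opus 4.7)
The plan is to reduce the stated inequality (after summing over $t$ from $1$ to $T$, as dictated by the $S_T$ on the right--hand side) to a bound on $\sum_{t=1}^{T}\alpha_t |m_{t,i}|^2/v_{t,i}^{1/p}$ for each coordinate $i$, and then sum over $i$. Since $\beta_{1,t}\le\beta_1$, I would absorb the coefficients into a constant of order $(1+\beta_1)/(2(1-\beta_1))$ and also pick up a harmless index shift from the $m_{t-1}$ term. The momentum update then gives $m_{t,i}=\sum_{k=1}^{t}\Gamma_{t,k}\,g_{k,i}$ with $\Gamma_{t,k}\le (1-\beta_1)\beta_1^{t-k}$ and $\sum_k\Gamma_{t,k}\le 1$; I will work with this explicit expansion throughout.

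The core inequality to establish is
\[
\sum_{t=1}^{T}\alpha_t\frac{|m_{t,i}|^2}{v_{t,i}^{1/p}}
\;\lesssim\;
\alpha\,\Bigl(\tfrac{B_T}{Tb_T^p}\Bigr)^{1/p}\Bigl(\sum_{t=1}^{T}b_t g_{t,i}^p\Bigr)^{(p-1)/p}.
\]
My first attempt is Cauchy--Schwarz on the momentum to get $|m_{t,i}|^2\le (1-\beta_1)\sum_{k=1}^{t}\beta_1^{t-k}g_{k,i}^2$, and then use the pointwise lower bound $v_{t,i}^{1/p}\ge (b_k/B_t)^{1/p}|g_{k,i}|$ (which holds for every $k\le t$ because $v_{t,i}=\sum_k (b_k/B_t)g_{k,i}^p$) to absorb one copy of $|g_{k,i}|$ into the denominator, yielding $g_{k,i}^2/v_{t,i}^{1/p}\le |g_{k,i}|(B_t/b_k)^{1/p}$. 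Swapping the order of summation and applying Condition~2 in the form $B_t/b_k\le t\,b_k^{p-1}B_T/(T b_T^p)$ (using also Condition~1, which gives $b_t$ non-increasing), I find that $\alpha_t\beta_1^{t-k}(B_t/b_k)^{1/p}\le \alpha\,b_k^{(p-1)/p}(B_T/(Tb_T^p))^{1/p}\beta_1^{t-k}$, so that the $t^{1/p}$ from $(B_t/b_k)^{1/p}$ cancels exactly the $t^{-1/p}$ from $\alpha_t$. The geometric series in $\beta_1^{t-k}$ then contributes a factor $1/(1-\beta_1)$, reducing everything to controlling $\sum_k |g_{k,i}|\,b_k^{(p-1)/p}$.

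The last step is a Hölder inequality on $\sum_k |g_{k,i}|\,b_k^{(p-1)/p}$, tuned so that the output is exactly $(\sum_k b_k g_{k,i}^p)^{(p-1)/p}=S_T^{(p-1)/p}$ with constant $p/(2(p-1))$. This is the main obstacle: the naïve Hölder pairing (writing $|g_{k,i}|\,b_k^{(p-1)/p}=(b_k^{1/p}|g_{k,i}|)\cdot b_k^{(p-2)/p}$ with exponents $p$ and $p/(p-1)$) yields $S_T^{1/p}$, not the required $S_T^{(p-1)/p}$, and loses a factor of roughly $T^{(p-1)/p}$. To fix this I expect that one must fold the momentum bound and the Hölder step together rather than separately: applying Hölder directly inside $|m_{t,i}|=|\sum_k\Gamma_{t,k}g_{k,i}|$ with exponents $p/(p-1)$ and $p$ and the splitting $g_{k,i}=(b_k g_{k,i}^p/B_t)^{1/p}(B_t/b_k)^{1/p}$ gives $|m_{t,i}|\le (\sum_k\Gamma_{t,k}^{p/(p-1)}(B_t/b_k)^{1/(p-1)})^{(p-1)/p}\cdot v_{t,i}^{1/p}$, so that the $v_{t,i}^{1/p}$ factors match the denominator and one copy survives. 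Squaring, one $v_{t,i}^{1/p}$ remains in the numerator; bounding it via $v_{t,i}\le (1/B_t)\sum_{k\le T}b_k g_{k,i}^p$ and then applying Hölder over $t$ is what I expect produces the $S_T^{(p-1)/p}$ factor together with the constant $p/(2(p-1))$, which originates from the Hölder conjugate exponent $p/(p-1)$. Verifying that this more refined application indeed collapses to the stated right--hand side (and not a weaker version with an extra factor of $T^{(p-1)/p}$) is the delicate bookkeeping step; once $p=2$ is checked, the general $p>1$ case follows by the same template with $\sqrt{\cdot}$ replaced by $(\cdot)^{1/p}$.
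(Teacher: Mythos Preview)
Your first attempt is correctly diagnosed as too lossy: bounding $v_{t,i}^{1/p}\ge (b_k/B_t)^{1/p}|g_{k,i}|$ using only the single term indexed by $k$ discards exactly the information needed to avoid the $T^{(p-1)/p}$ loss. Your second attempt does not repair this. After your refined H\"older you obtain $|m_{t,i}|^2/v_{t,i}^{1/p}\le A_t^{2(p-1)/p}\,v_{t,i}^{1/p}$ with $A_t=\sum_k\Gamma_{t,k}^{p/(p-1)}(B_t/b_k)^{1/(p-1)}$; bounding the surviving $v_{t,i}^{1/p}$ by $(S_T/B_t)^{1/p}$ yields only a factor $S_T^{1/p}$, and since $A_t$ contains no gradients at all there is nothing left to H\"older against in order to manufacture the missing $S_T^{(p-2)/p}$. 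The ``delicate bookkeeping'' you defer to cannot close this gap.

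The device the paper uses, and which your proposal is missing, is a \emph{telescoping} argument on the partial sums $S_j=\sum_{k\le j}b_kg_{k,i}^p$. Instead of lower--bounding $v_{t,i}$ by a single summand, one keeps all terms up to $j$: $v_{t,i}\ge B_t^{-1}\sum_{k\le j}b_kg_{k,i}^p=S_j/B_t$. After swapping the order of summation, using that $B_t/t$ is non-increasing, and summing the geometric series in $\beta_1$, the problem reduces to controlling $\sum_{j=1}^T (B_j/j)^{1/p}\,g_{j,i}^p/S_j^{1/p}$. The key step is then the elementary concavity inequality
\[
\frac{1}{S_j^{1/p}}\;\le\;\frac{p}{p-1}\,\frac{S_j^{(p-1)/p}-S_{j-1}^{(p-1)/p}}{S_j-S_{j-1}},
\]
which together with $S_j-S_{j-1}=b_jg_{j,i}^p$ converts the sum into $\frac{p}{p-1}\sum_j\bigl(S_j^{(p-1)/p}-S_{j-1}^{(p-1)/p}\bigr)\bigl(B_j/(jb_j^p)\bigr)^{1/p}$. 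Abel summation and Condition~2 (monotonicity of $B_j/(jb_j^p)$) then kill the interior terms and leave exactly $\frac{p}{p-1}\,S_T^{(p-1)/p}\bigl(B_T/(Tb_T^p)\bigr)^{1/p}$. Both the exponent $(p-1)/p$ on $S_T$ and the constant $p/(p-1)$ arise from this discrete integral comparison $\sum_j (S_j-S_{j-1})S_j^{-1/p}\le\tfrac{p}{p-1}S_T^{(p-1)/p}$, not from a H\"older pairing.
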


\subsubsection*{Proof of Lemma \ref{new one,p>1}}

For the second term in Lemma \ref{all}  :
  \begin{align*}
  &\sum_{t=1}^{T} \alpha_t ||V_t^{-1/2p}m_t||^2 \\
  =&\sum_{t=1}^{T-1}\alpha_t ||V_t^{-1/2p}m_t||^2+ \alpha_T\sum_{i=1}^{d}\frac{m_{T,i}^2}{v_{T,i}^{1/p}}\\
  \leq &\sum_{t=1}^{T-1}\alpha_t ||V_t^{-1/2p}m_t||^2 + \alpha_T\sum_{i=1}^{d} \frac{(\sum_{j=1}^{T}(1-\beta_{1j})\Pi_{k=1}^{T-j}\beta_{1(T-k+1)}g_{j,i})^2 )}{(\sum_{j=1}^{T}\Pi_{k=1}^{T-j}\beta_{2(T-k+1)}(1-\beta_{2j})g_{j,i}^2)^{1/p}}\\
  \leq &  \sum_{t=1}^{T-1}\alpha_t ||V_t^{-1/2p}m_t||^2 \\
 & \hspace*{0.2in} + \alpha_T\sum_{i=1}^{d} \frac{   (\sum_{j=1}^{T}(1-\beta_{1j})\Pi_{k=1}^{T-j}\beta_{1(T-k+1)} )(\sum_{j=1}^{T}(1-\beta_{1j})\Pi_{k=1}^{T-j}\beta_{1(T-k+1)}g_{j,i}^2 )   }{(\sum_{j=1}^{T}\Pi_{k=1}^{T-j}\beta_{2(T-k+1)}(1-\beta_{2j})g_{j,i}^2)^{1/p}}\\
  \leq & \sum_{t=1}^{T-1}\alpha_t ||V_t^{-1/2p}m_t||^2 + \alpha_T\sum_{i=1}^{d} \frac{   (\sum_{j=1}^{T}\beta_{1}^{T-j})(\sum_{j=1}^{T}\beta_{1}^{T-j}g_{j,i}^2 )   }{(\sum_{j=1}^{T}\Pi_{k=1}^{T-j}\beta_{2(T-k+1)}(1-\beta_{2j})g_{j,i}^2)^{1/p}}\\
  \leq & \sum_{t=1}^{T-1}\alpha_t ||V_t^{-1/2p}m_t||^2 + \frac{\alpha_T}{1-\beta_1}  \sum_{i=1}^{d} \frac{ \sum_{j=1}^{T}\beta_{1}^{T-j}g_{j,i}^2  }{(\sum_{j=1}^{T}\Pi_{k=1}^{T-j}\beta_{2(T-k+1)}(1-\beta_{2j})g_{j,i}^2)^{1/p}}\\
  \leq & \sum_{i=1}^{d} \sum_{t=1}^{T}\frac{\alpha_t}{1-\beta_1}   \frac{ \sum_{j=1}^{t}\beta_{1}^{t-j}g_{j,i}^2  }{(\sum_{j=1}^{t}\Pi_{k=1}^{t-j}\beta_{2(T-k+1)}(1-\beta_{2j})g_{j,i}^2)^{1/p}}\\
  \end{align*}

 For the third term of in Lemma \ref{all}  :
  \begin{align*}
  &\sum_{t=1}^{T} \alpha_t ||V_t^{-1/2p}m_{t-1}||^2 \\
  =&\sum_{t=1}^{T-1}\alpha_t ||V_t^{-1/2p}m_{t-1}||^2+ \alpha_T\sum_{i=1}^{d}\frac{m_{T-1,i}^2}{v_{T,i}^{1/p}}\\
  \leq &\sum_{t=1}^{T-1}\alpha_t ||V_t^{-1/2p}m_{t-1}||^2 + \alpha_T\sum_{i=1}^{d} \frac{(\sum_{j=1}^{T-1}(1-\beta_{1j})\Pi_{k=1}^{T-1-j}\beta_{1(T-k)}g_{j,i})^2 )}{(\sum_{j=1}^{T}\Pi_{k=1}^{T-j}\beta_{2(T-k+1)}(1-\beta_{2j})g_{j,i}^2)^{1/p}}\\
  \leq & \sum_{t=1}^{T-1}\alpha_t ||V_t^{-1/2p}m_{t-1}||^2 + \alpha_T\sum_{i=1}^{d} \frac{   (\sum_{j=1}^{T}\beta_{1}^{T-1-j})(\sum_{j=1}^{T-1}\beta_{1}^{T-1-j}g_{j,i}^2 )   }{(\sum_{j=1}^{T}\Pi_{k=1}^{T-j}\beta_{2(T-k+1)}(1-\beta_{2j})g_{j,i}^2)^{1/p}}\\
  \leq & \sum_{t=1}^{T-1}\alpha_t ||V_t^{-1/2p}m_{t-1}||^2 + \frac{\alpha_T}{1-\beta_1}  \sum_{i=1}^{d} \frac{ \sum_{j=1}^{T-1}\beta_{1}^{T-1-j}g_{j,i}^2  }{(\sum_{j=1}^{T}\Pi_{k=1}^{T-j}\beta_{2(T-k+1)}(1-\beta_{2j})g_{j,i}^2)^{1/p}}\\
  \leq & \sum_{i=1}^{d} \sum_{t=1}^{T}\frac{\alpha_t}{1-\beta_1}   \frac{ \sum_{j=1}^{t-1}\beta_{1}^{t-1-j}g_{j,i}^2  }{(\sum_{j=1}^{t}\Pi_{k=1}^{t-j}\beta_{2(T-k+1)}(1-\beta_{2j})g_{j,i}^2)^{1/p}}\\
  \end{align*}
  
  What's more:
  
  \begin{align*}
  &\sum_{t=1}^{T}  \frac{\alpha_t}{1-\beta_1} \frac{\sum_{j=1}^t\beta_1^{t-j}g_{j}^p}{ [\sum_{j=1}^t \Pi_{k=1}^{t-j}\beta_{2,t-k+1}(1-\beta_{2,j})g_{j}^p]^{\frac{1}{p}}}
  =
  \sum_{t=1}^{T}  \frac{\alpha}{1-\beta_1} \frac{\sum_{j=1}^t\beta_1^{t-j}g_{j}^p}{ [\sum_{j=1}^t \frac{t}{B(t)}b_j g_{j}^p]^{\frac{1}{p}}}      \\
  \le&\sum_{t=1}^{T}  \frac{\alpha}{1-\beta_1} (\frac{B_t}{t})^{\frac{1}{p}} \sum_{j=1}^t \frac{\beta_1^{t-j}g_{j}^p}{(\sum_{k=1}^j b_k g_{k}^p)^{\frac{1}{p}}}
  = \sum_{j=1}^T\sum_{t=j}^T \frac{\beta_1^{t-j}g_{j}^p}{(\sum_{k=1}^j b_k g_{k}^p)^{\frac{1}{p}} }(\frac{B_t}{t})^{\frac{1}{p}}  \frac{\alpha}{1-\beta_1}\\
  \leq& \sum_{j=1}^T\sum_{t=j}^T \frac{\beta_1^{t-j}g_{j}^p}{(\sum_{k=1}^j b_k g_{k}^p)^{\frac{1}{p}} }(\frac{B_j}{j})^{\frac{1}{p}}  \frac{\alpha}{1-\beta_1}\\
  \leq& \frac{\alpha}{(1-\beta_1)^2} \sum_{j=1}^T \frac{g_{j}^p}{(\sum_{k=1}^j b_k g_{k}^p)^{\frac{1}{p}}}  (\frac{B_j}{j})^{\frac{1}{p}}
  \end{align*}

  The first inequality comes from $\sum_{k=1}^j b_k g_{k}^p\le \sum_{k=1}^t b_k g_{k}^p$.
  The second inequality comes from  that $B_t/t$ is non-increasing with respect to $t$. The last inequality follows from then inequality
  $\sum_{t=j}^{T} \beta_{1}^{t-j}\le 1/(1-\beta_{1})$.
  
  Similar to the proof above:
  
  \begin{align*}
  &\sum_{t=1}^{T}  \frac{\alpha_t}{1-\beta_1} \frac{\sum_{j=1}^{t-1}\beta_1^{t-1-j}g_{j}^p}{ [\sum_{j=1}^t \Pi_{k=1}^{t-j}\beta_{2,t-k+1}(1-\beta_{2,j})g_{j}^p]^{\frac{1}{p}}}
  =
  \sum_{t=1}^{T}  \frac{\alpha}{1-\beta_1} \frac{\sum_{j=1}^{t-1}\beta_1^{t-1-j}g_{j}^p}{ [\sum_{j=1}^t \frac{t}{B(t)}b_j g_{j}^p]^{\frac{1}{p}}}      \\
  \le&\sum_{t=1}^{T}  \frac{\alpha}{1-\beta_1} (\frac{B_t}{t})^{\frac{1}{p}} \sum_{j=1}^{t-1} \frac{\beta_1^{t-1-j}g_{j}^p}{(\sum_{k=1}^j b_k g_{k}^p)^{\frac{1}{p}}}
  = \sum_{j=1}^{T-1}\sum_{t=j+1}^T \frac{\beta_1^{t-1-j}g_{j}^p}{(\sum_{k=1}^j b_k g_{k}^p)^{\frac{1}{p}} }(\frac{B_t}{t})^{\frac{1}{p}}  \frac{\alpha}{1-\beta_1}\\
  \leq& \sum_{j=1}^{T-1}\sum_{t=j+1}^T \frac{\beta_1^{t-1-j}g_{j}^p}{(\sum_{k=1}^j b_k g_{k}^p)^{\frac{1}{p}} }(\frac{B_j}{j})^{\frac{1}{p}}  \frac{\alpha}{1-\beta_1}\\
  \leq& \frac{\alpha}{(1-\beta_1)^2} \sum_{j=1}^{T-1} \frac{g_{j}^p}{(\sum_{k=1}^j b_k g_{k}^p)^{\frac{1}{p}}}  (\frac{B_j}{j})^{\frac{1}{p}}
  \end{align*}

  Let $S_j = \sum_{k=1}^j b_k g_{k}^p$,
  \begin{align*}
  &\sum_{j=1}^T \frac{g_{j}^p}{(\sum_{k=1}^j b_k g_{k}^p)^{\frac{1}{p}}}  (\frac{B_j}{j})^{\frac{1}{p}}
  = \sum_{j=1}^T\frac{g_{j}^p}{S_j^{\frac{1}{p}}}(\frac{B_j}{j})^{\frac{1}{p}} \\
  \leq& \sum_{j=1}^T g_j^p    (\frac{B_j}{j})^\frac{1}{p}    \frac{p}{p-1} \frac{ ( S_j^{\frac{p-1}{p}}-S_{j-1}^{\frac{p-1}{p}} )   }{  S_j-S_{j-1}   }
  = \frac{p}{p-1}  \sum_{j=1}^T (  S_j^{\frac{p-1}{p}}-S_{j-1}^{\frac{p-1}{p}}    )(\frac{B_j}{j b_j^p})^\frac{1}{p}\\
  =&\frac{p}{p-1}  S_T^{\frac{p-1}{p}}  (\frac{B_T}{T b_T^p})^{\frac{1}{p}} +   \frac{p}{p-1} \sum_{j=1}^{T-1} [-(\frac{B_{j+1}}{(j+1) b_{j+1}^p})^{\frac{1}{p}} +(\frac{B_j}{j b_j^p})^{\frac{1}{p}} ] S_{j}^{\frac{p-1}{p}} \\
  \leq&  \frac{p}{p-1}  S_T^{\frac{p-1}{p}}  (\frac{B_T}{T b_T^p})^{\frac{1}{p}}
  \end{align*}
  
  The first inequality comes from Lemma \ref{when p>1} when $p>1$. The last inequality comes from the second constraint, which tells us that $B_j/(j b_j^p)$ is non-decreasing with respect to $j$. This completes the proof of the lemma.

This finally completes the proof of Lemma \ref{new one,p>1}.

To complete Lemma \ref{new one,p>1}, we need to finally prove the next Lemma.

  \begin{lemma}\label{when p>1}
    $$\frac{1}{S_j^{\frac{1}{p}}} \le \frac{p}{p-1} \frac{ ( S_j^{\frac{p-1}{p}}-S_{j-1}^{\frac{p-1}{p}} )   }{   S_j-S_{j-1}   }$$
  \end{lemma}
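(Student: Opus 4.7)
The plan is to recognize the right-hand side as a discrete derivative of the concave function $f(x) = x^{(p-1)/p}$ and then use a one-line concavity (or mean value) argument.

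First I would set $f(x) = x^{(p-1)/p}$ so that $f'(x) = \tfrac{p-1}{p}\, x^{-1/p}$, which is well-defined and positive on $(0,\infty)$. Since $p>1$ the exponent $(p-1)/p$ lies in $(0,1)$, so $f$ is strictly concave on $(0,\infty)$. Note also that the sequence $S_j = \sum_{k=1}^j b_k g_k^p$ is nondecreasing ($b_k, g_k^p \ge 0$), so $S_{j-1} \le S_j$ and we may assume $S_j > 0$ (otherwise there is nothing to prove since the inequality is vacuous or follows by continuity).

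Next, by concavity of $f$, the tangent line at $S_j$ lies above the graph, giving
\begin{equation*}
f(S_{j-1}) \;\le\; f(S_j) + f'(S_j)\,(S_{j-1}-S_j).
\end{equation*}
Rearranging yields
\begin{equation*}
\frac{S_j^{(p-1)/p}-S_{j-1}^{(p-1)/p}}{S_j - S_{j-1}} \;\ge\; f'(S_j) \;=\; \frac{p-1}{p}\, S_j^{-1/p},
\end{equation*}
and multiplying through by $p/(p-1) > 0$ gives exactly the claimed bound. (Equivalently, one may apply the mean value theorem to get $f'(\xi)$ for some $\xi \in (S_{j-1},S_j)$ and then use the monotonicity $\xi \le S_j \Rightarrow \xi^{-1/p} \ge S_j^{-1/p}$; the concavity route is cleaner.)

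No step here is really an obstacle: the only thing to watch is the edge case $S_j = S_{j-1}$, where the difference quotient is interpreted as a derivative; since $S_j \ge S_{j-1}$ always, and equality can be handled by continuity, the argument goes through without issue. The whole proof is essentially the observation that for a concave increasing function the secant slope dominates the right-endpoint tangent slope.
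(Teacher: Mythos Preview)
Your proof is correct. Both your argument and the paper's reduce to the same elementary inequality for the power function, namely
\[
(s+x)^{(p-1)/p} - s^{(p-1)/p} \;\ge\; \tfrac{p-1}{p}\,(s+x)^{-1/p}\,x \qquad (s>0,\ x\ge 0,\ p>1),
\]
with $s=S_{j-1}$ and $s+x=S_j$. The paper verifies this by a short chain of algebraic rearrangements starting from the trivial bound $(s/(s+x))^{(p-1)/p}\le 1$, whereas you obtain it in one stroke from the concavity of $x\mapsto x^{(p-1)/p}$ (tangent line at the right endpoint dominates the graph). The two routes are mathematically equivalent; your framing is a bit more conceptual and handles the edge case $S_j=S_{j-1}$ transparently, while the paper's version stays purely elementary and avoids invoking calculus facts. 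Either way the lemma follows immediately.
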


\subsubsection*{Proof of Lemma \ref{when p>1}}
    When $p>1$,$s>0$,$x\geq 0$
    \begin{align*}
    &1\le \frac{p}{p-1} [1- \frac{1}{p} (\frac{s}{s+x})^{\frac{p-1}{p}} ] \\
    \Rightarrow &x\le \frac{p}{p-1} [ (s+x) - s^{\frac{p-1}{p}}(s+x)^{\frac{1}{p}} ]\\
    \Rightarrow & \frac{1}{(s+x)^{\frac{1}{p}}} \le \frac{p}{p-1}  \frac{ ( (s+x)^{\frac{p-1}{p}}-s^{\frac{p-1}{p}} )   }{ x }\\
    \Rightarrow&\frac{1}{S_j^{\frac{1}{p}}} \le \frac{p}{p-1} \frac{ ( S_j^{\frac{p-1}{p}}-S_{j-1}^{\frac{p-1}{p}} )   }{   S_j-S_{j-1}   }\\
    \end{align*}
This completes the proof of Lemma \ref{when p>1}.

Finally, back to our theorem, using the inequalities in Lemma \ref{new one,p>1} and Lemma \ref{semi-positive part} to substitute the first three terms in Lemma \ref{all} completes the proof of our theorem.

\end{appendices}
\bibliographystyle{named}
\bibliography{ijcai19}

\end{document}